\newcommand*{\R}{\ensuremath{\mathbb{R}}}
\newcommand*{\RNN}{\ensuremath{\mathcal{RNN}}}
\newcommand*{\s}{\ensuremath{\sigma}}
\newcommand*{\sig}{\ensuremath{\sigma}}
\newcommand*{\del}{\ensuremath{\delta}}
\newcommand\normsig{\mathcal{N}(0, \frac{\sigma^2}{k})}
\newcommand\unif{\textrm{Unif}([0, 1])}
\newcommand\betad{\textrm{Beta}}
\newcommand\eigmin{\mu_{\min}}
\newcommand\eigmax{\mu_{\max}}
\newcommand\sign{\mathrm{sign}}
\newcommand{\clip}{\mathrm{clip}}
\newcommand{\relu}{\mathrm{ReLU}}
\newcommand{\abs}[1]{\left| #1 \right|}
\newtheorem{fact}{Fact}
\newcommand{\N}{\mathbb{N}}
\newcommand{\norm}[2][]{\left\| #2 \right\|_{#1}}
\newcommand{\paren}[1]{\left(#1 \right)}
\newcommand{\bracket}[1]{\left[#1 \right]}
\newcommand{\pr}[2][]{\mathrm{Pr}_{#1}\bracket{#2}}
\newcommand{\EE}[2][]{\mathbb{E}_{#1}\bracket{#2}}
\newcommand{\var}[2][]{\mathrm{Var}_{#1}\bracket{#2}}
\theoremstyle{plain}
\newtheorem{theorem}{Theorem}[section]
\newtheorem{lemma}[theorem]{Lemma}
\newtheorem{corollary}[theorem]{Corollary}
\theoremstyle{definition}
\newtheorem{definition}[theorem]{Definition}
\theoremstyle{remark}
\title{On Scrambling Phenomena\\ for Randomly Initialized Recurrent Networks}
\author{%
  Vaggos Chatziafratis\thanks{Authors order determined by the output of a randomly initialized recurrent network (operating at the chaotic regime).} \\
  Department of Computer Science\\
  University of California, Santa Cruz\\
  \texttt{vaggos@ucsc.edu} \\
  % examples of more authors
   \And
   Ioannis Panageas  \\
   University of California, Irvine \\
   \texttt{ipanagea@ics.uci.edu} \\
   \AND
   Clayton Sanford \\
   Columbia University \\
   \texttt{clayton@cs.columbia.edu} \\
   \And
   Stelios Andrew Stavroulakis \\
   University of California, Irvine \\
   \texttt{sstavrou@uci.edu} \\
}
\begin{document}

\maketitle
\setcounter{footnote}{0}

  % !TeX root = main.tex 
%!TEX root = main.tex
\begin{abstract}

%CONSTRAINED IS APPLICABLE. FOR ASYMPTOTIC CONVERGENCE one can USE MWU with parallel updates, and for rates we can use SNAP. 

Recurrent Neural Networks (RNNs) frequently exhibit complicated dynamics, and their sensitivity to the initialization process often renders them notoriously hard to train. 
Recent works have shed light on such phenomena analyzing when exploding or vanishing gradients may occur, either of which is detrimental for training dynamics. 
In this paper, we point to a formal connection between RNNs and chaotic dynamical systems and prove a qualitatively stronger phenomenon about RNNs than what exploding gradients seem to suggest. 
Our main result proves that under standard initialization (e.g., He, Xavier etc.), RNNs will exhibit \textit{Li-Yorke chaos} with \textit{constant} probability \textit{independent} of the network's width. 
This explains the experimentally observed phenomenon of \textit{scrambling}, under which trajectories of nearby points may appear to be arbitrarily close during some timesteps, yet will be far away in future timesteps. 
In stark contrast to their feedforward counterparts, we show that chaotic behavior in RNNs is preserved under small perturbations and that their expressive power remains exponential in the number of feedback iterations. 
Our technical arguments rely on viewing RNNs as random walks under non-linear activations, and studying the existence of certain types of higher-order fixed points called \textit{periodic points} that lead to phase transitions from order to chaos.

\end{abstract}

\section{Introduction}
%setup for NMF
%PCA, VQ, NMF as methods for data science. PCA well understood. but not NMF

%motivate why nmf is important: parts of whole with non negative numbers etc. vast number applications

%lee seung algo

%gave rise to many types of algos: variants of BCD with various update rules (figure 1 about multiple ways to update)

%first order, local minima, second order: make it from more general to more specific. we show the MOST specific. with the added benefit that there is no need to be sequential...

\label{sec:intro}
In standard feedforward neural networks (FNNs),  computation is performed “from left to right” propagating the input through the hidden units to the output. 
In contrast, recurrent neural networks (RNNs) form a feedback loop, transfering information from their output back to their input (e.g., LSTMs~\citep{hochreiter1997long}, GRUs~\citep{cho2014learning}). 
This feedback loop allows RNNs to share parameters across time because the weights and biases of each iteration are identical.
As a result, RNNs can capture long range temporal correlations in the input. 
For these reasons, they have been very successful in applications in sequence learning domains, such as speech recognition, natural language processing, video understanding, and time-series prediction~\citep{bahdanau2014neural,cho2014learning,chung2014empirical}.

Unfortunately, their unique ability to share parameters across time comes at a cost: RNNs are sensitive to their initialization processes, which makes them extremely difficult to train and causes complicated evaluation and training dynamics~\citep{le2015simple,laurent2016recurrent,miller2018stable}. Roughly speaking, because the hidden units of an RNN are applied to the input over and over again, the final output can  quickly explode or vanish, depending on whether its Jacobian's spectral norm is greater or smaller than one respectively.
%\clayton{Will this always happen (e.g. large Jacobian always means exploding gradient) or is it just WHP?} % I believe there is no "whp" term. it's just always true.
Similar issues also arise during backpropagation that hinder the learning process~\citep{allen2018convergence}.
%(We want to:
%propagate information: avoid activations explosion/collapse; backpropagate information: avoid gradients explosion/collapse.
%We must avoid these problems at initialization.
%⇒ ��nd a good initialization distribution for W l and b)

Besides the hurdles with their implementation, recurrent architectures pose significant theoretical challenges. Several basic questions include how to properly \textit{initialize} RNNs, what is their \textit{expressivity} power (also known as representation capabilities), and \textit{why} do they converge or diverge, all of which require further investigations. In this paper, we take a closer look at randomly initialized RNNs: 
    \begin{center}
    \textit{Can we get a better understanding of the behavior of RNNs at initialization using dynamical systems?}
    \end{center}
    %\clayton{``long-term'' may be interpreted as ``after many iterations of GD'' rather than ``after many applications of an RNN''} %vaggos: I believe it's ok at this point for the reader to either think after applying GD or after applying just the RNN. At this point we are not specifying exactly what we do, just motivating the question of long term behavior. I hope that's fine, thanks for the comment!
%To address this, we study randomly initialized RNNs from the point of view of dynamical systems. 
We draw on the extensive dynamical systems literature---which has long asked similar questions about the topological behavior of iterated compositions of functions---to study the properties of RNNs with standard random initializations.
We prove that under common initialization strategies, e.g., He or Xavier~\citep{he2015delving,he2016deep}, RNNs can produce dynamics that are characterized by chaos, even in innocuous settings and even in the absence of external input. 
Most importantly, chaos arises with \textit{constant} probability which is \textit{independent} of the network's width. Our theoretical findings explain empirically observed behavior of RNNs from prior works, and are also validated in our experiments.\footnote{Our code is made publicly available here: \url{https://github.com/steliostavroulakis/Chaos_RNNs/blob/main/Depth_2_RNNs_and_Chaos_Period_3_Probability_RNNs.ipynb}}

More broadly, our work builds on recent works that aim at understanding neural networks through the lens of dynamical systems; for example,~\cite{iclr,chatziafratis2020better} use Sharkovsky's theorem from discrete dynamical systems to provide depth-width tradeoffs for the representation capabilities of neural networks, and~\cite{sanford2022expressivity} further give more fine-grained lower bounds based on the notion of ``chaotic itineraries''.

\subsection{Two Motivating Behaviors of RNNs}
Before stating our main result, we illustrate two concrete behaviors of RNNs that inspired our work. %, their connections to dynamical systems, and how our theory addresses them.  
 The first example demonstrates that randomly initialized RNNs can lead to what is perhaps most commonly perceived as ``chaos'', while the second example demonstrates a qualitatively different behavior of RNNs compared to FNNs. 
Our main result unifies the conclusions drawn from these two.% examples with tools from dynamical systems.

\paragraph{Scrambling Trajectories at Initialization}   
%Setting-Question-Answer-Interpretation

Prior works have empirically demonstrated that RNNs can behave chaotically when their weights and biases are chosen according to a certain scheme. 
For example,~\citet{laurent2016recurrent} consider a simple 4-dimensional RNN with specific parameters in the absence of input data.
They plot the trajectories of two nearby points $x,y$ with $\|x-y\|\le 10^{-7}$ as they are propagated through many iterations of the RNN. They observe that the long-term behavior (e.g., after 200 iterations) of the trajectories is highly sensitive to the initial states, because distances may become small, then large again and so on. We ask:%Given the sensitivity of RNNs under specific parameters, a natural question emerges:
\begin{center}
    \textit{Are RNNs (provably) chaotic even under standard heuristics for random initialization?}
\end{center}
%this question is interesting as it tells us about the behavior of MOST networks, as we chose one at random and has certain property. 
%To interepret, WE INITIALIZE THE NET AND WE GET CHAOS SO WE WILL FAIL TO LEARN MEANINGFULLY
We answer this question in the affirmative both experimentally and theoretically. 
This question is helpful to understand for multiple reasons. 
\begin{figure}[ht]
    \centering
    \includegraphics[width=0.55\textwidth]{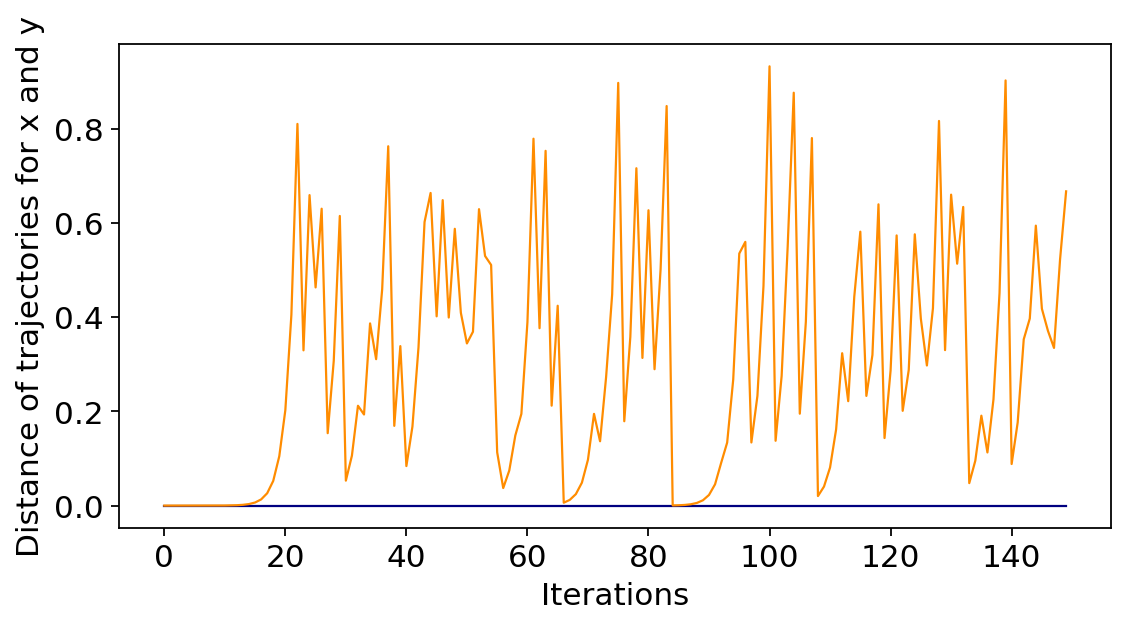}
    \vspace{-0.2cm}
    \caption{Points $x,y$ with initial distance $\|x-y\|\le 10^{-7}$ and their subsequent $\| f^t(x) - f^t(y) \|$ distances  across $t=150$ iterations of a randomly initialized RNN. The idea behind scrambling is that the trajectories, even though they get arbitrarily close, they also separate later and vice versa. %For $t=100$ iterations of an RNN, we plot the distance $\| f^t(x) - f^t(y) \|$ between the trajectories of initial points $x,y$ with $\|x-y\|\le 10^{-7}$. Observe that the trajectories may get arbitrarily close together, yet without staying close together.}%\clayton{To make this clearer, maybe we want to define $t$ timesteps of an RNN as $f^t(x)$ and say that we're plotting $\| f^t(x) - f^t(y)\|$ instead? Otherwise, it's a little confusing what is changing.}
    }
    \label{fig:scrambling}
\end{figure}
First, it informs us about the behavior of \textit{most} RNNs, as we start with a random setting of the parameters. Second, proving that a system is chaotic is qualitatively much stronger than simply knowing its gradient to be exploding; this will become evident below, where we describe the phenomenon of \textit{scrambling} from dynamical systems. Finally, understanding why and how often an RNN is chaotic can lead to even better methods for initialization.

%further evidence for initilization tricks, exploding gradient vs chaos, most networks, unified way to address observed bahaviors

To begin, we empirically verify the above statement by examining randomly initialized RNNs. 
Figure~\ref{fig:scrambling} demonstrates that trajectories of different points may be close together during some timesteps and far apart during future timesteps, or vice versa. 
This phenomenon, which will be rigorously established in later sections, is called \textit{scrambling}~\citep{li1975period} and emerges as a direct consequence of the existence of higher order fixed points (called periodic points) of the continuous map defined by the random RNN.

\paragraph{Persistence of Chaos in RNNs vs FNNs} Our second example is related to the expressive power of neural networks. A standard measure used to capture their representation capabilities is the number of linear regions formed by the non-linear activations~\citep{montufar2014number}; the higher this number, the more expressive the network is. Counting the maximum possible number of linear regions across different architectures has also been leveraged in order to obtain depth vs width tradeoffs in many works~\citep{telgarsky15, eldan2016COLT, iclr, chatziafratis2020better}.

Here, we are inspired by a remarkable observation of~\citet{hanin2019complexity}, who showed that real-valued FNNs from $\mathbb{R}\to\mathbb{R}$ may lose their expressive power if their weights are randomly perturbed or those weights are initialized according to standard methods. 
Roughly speaking, they show that in such networks the number of linear regions grows only linearly in the total number of neurons.
These results contrast with the aforementioned analyses of the theoretical maximum number of regions, which is exponential in depth. We ask the analogous question for RNNs instead of FNNs:

\begin{center}
\textit{Will randomly initialized or perturbed RNNs lose their high expressivity like FNNs did?}
\end{center}

We show a contrast between RNNs and FNNs, by showing that the number of linear regions in random RNNs remains exponential on the number of its feedback iterations. Once again, the fundamental difference lies on the fact that RNNs share parameters across time. Indeed, the analyses of many prior works~\citep{hanin2019complexity,hanin2019deep,hanin2021deep} crucially relies on the ``fresh'' randomness injected at every layer, e.g., that the weights/biases of each neuron are initialized \textit{independently} of each other; obviously, this is no longer true in RNNs where the same units are repeatedly used across different iterations. This shared randomness raises new technical challenges for bounding the number of linear regions, but we manage to indirectly bound them by studying fixed points of random RNNs.

%Shared randomness challenging, so we go with another route to prove chaos that of periodic points.
%Shared randomness leads to different behavior. Contrary to fresh randomness that kills the exponential expressivity power, in RNNs randomness is reused and this can lead to oscillatory behavior. 

%To a well-versed reader the connection between the two questions raised in this section is straightforward as they refer to notions of complicated behavior of RNNs: one is about the dynamics, the other about the expressivity. Due to shared randomness of RNNs leads to them, and we prove it in a simple setting. We define the general RNNs, setting the input to all neurons, taking ReLU and repeating with weights afterwards...Even in simple case of 1-dimensional maps we show Li-Yorke chaos.

%In contrast to feedforward neural networks, which have been shown to
%Prior works experimentally demonstrate chaos for multiple architectures. Show how to mitigate it. Here we prove the underlying mechanism by which chaos occurs.
%Setting-Our experimental observation vs FNNs-Question-Answer-Interpretation/Comparison (e.g., exploding vs chaos: this result subsumes prior works that prove the exploding gradients as any chaotic map has jacobian larger than 1, furthermore: qualitatively different behaviors compared to FNNs technical: hanin's sum of inverse widths opposite behavior to us, expressivity opposite behavior to FNNs: maybe make them bullets)

%dynamical systems
%MAIN QUESTION?

%\paragraph{Activation Patterns from Order to Chaos}

% Stelios
\begin{figure}[h]
     \centering
     \includegraphics[width=\textwidth]{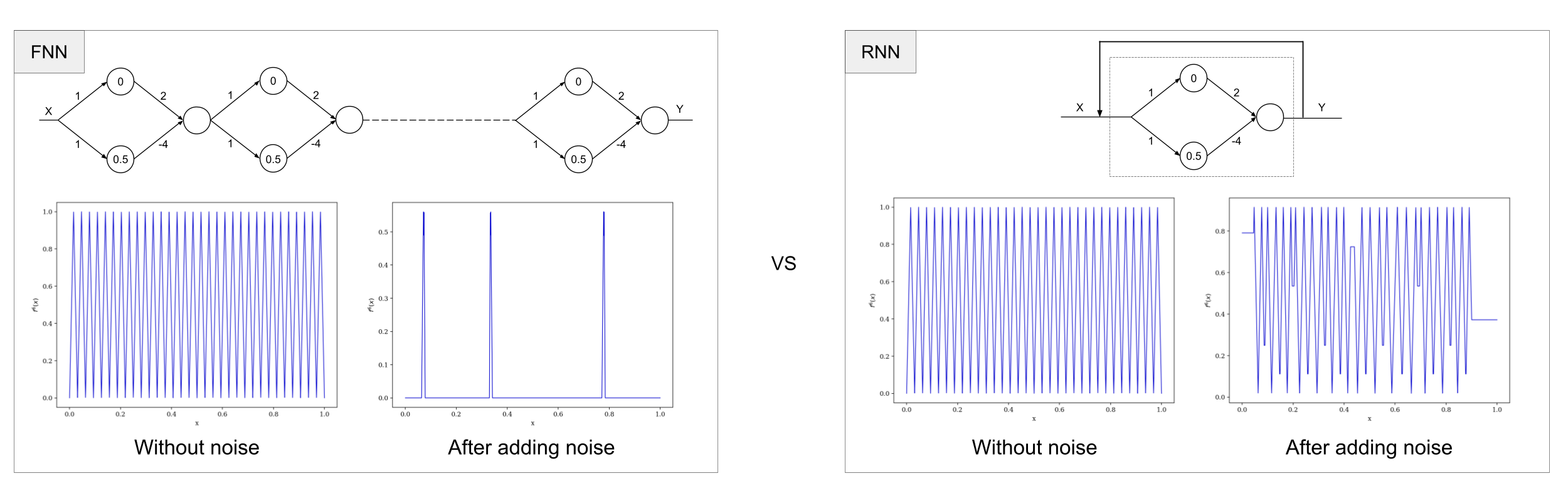}
     %\newline
     %\includegraphics[width=0.235\textwidth]{}
     %\includegraphics[width=0.235\textwidth]{}
     \caption{\textbf{Left:} Recreation of a figure from~\citet{hanin2019complexity} where small Gaussian perturbation ($\mathcal{N}(0,0.1)$) is added \textit{independently} on every weight/bias of each layer of a simple FNN. As a result, after adding noise, the high expressivity (i.e., number of linear regions) breaks down;   having ``fresh'' noise in each layer was crucial. \textbf{Right:} Here we depict the same network as before but shown as RNN, and we add noise as before. Perhaps surprisingly, RNNs exhibit different behavior than FNNs: high expressivity is preserved even after noise. As we show, due to \textit{shared} randomness across iterations, small perturbations do not ```break'' expressivity.}
     \label{fig:noisyFNN}
\end{figure}

%\begin{figure}[h]
%    \centering
%    \;\;\;\;\;\;\;\;\;\;\;\includegraphics[width=0.3\textwidth]{RNN_Sketch.png}
%    \newline
%    \includegraphics[width=0.235\textwidth]{Hanin_FF_0.png}
%    \includegraphics[width=0.235\textwidth]{}
%    \caption{Same network as in~Fig.~\ref{fig:noisyFNN}, shown as recurrent network. We observe that due to shared randomness across iterations, small Gaussian perturbations ($\mathcal{N}(0,0.1)$) do not ```break'' expressivity.}
%    \label{fig:noisyRNN}
%\end{figure}

\vspace{-0.4cm}
\section{Our Main Results}

Our main contribution is to prove that randomly initialized RNNs can exhibit \textit{Li-Yorke chaos}~\citep{li1975period} (see definitions below), and to quantify when and how often this type of chaos appears as we vary the variance of the weights chosen by the random initialization. To do so, we use discrete dynamical systems which naturally capture the behavior of RNNs: simply start with some shallow NN implementing a continuous map $f$, and after $t$ feedback iterations of the RNN, its output will be exactly $f^t$ ($f$ composed with itself $t$ times). We begin with some basic definitions:

\begin{definition}\label{def:scramble}
 (Scrambled Set) Let $(X,d)$ be a compact metric space and let $f: X\to X$ be a continuous map. Two points $x,y\in X$ are called \textit{proximal} if: 
\[
\liminf_{n\to \infty} d(f^n(x),f^n(y))=0
\]
and are called \textit{asymptotic} if 
\[
\limsup_{n\to \infty} d(f^n(x),f^n(y))=0.
\]
A set $Y\subseteq X$ is called \textit{scrambled} if $\forall x,y \in Y, x\neq y$, the points $x,y$ are proximal, but \textit{not} asymptotic.
\end{definition}

\begin{definition}\label{def:chaos}
(Li-Yorke Chaos) The dynamical system $(X,f)$ is \textit{Li-Yorke chaotic} if there is an uncountable scrambled set $Y\subseteq X$.
\end{definition}
%\vspace{-0.2cm}

Li-Yorke chaos leads both to scrambling phenomena (Figure~\ref{fig:scrambling}) and to high expressivity (Right of Figure~\ref{fig:noisyFNN}). Interestingly, this is not true for FNNs (Left of Figure~\ref{fig:noisyFNN}), where the fresh randomness at each neuron leads to concentration of the Jacobian's norm around 1, which is sufficient to avoid chaos. 
%\clayton{We might want to reference Figs 2 and 3, if they'll be introduced here} 
In other words, these definitions capture exactly the fact that trajectories get arbitrarily close, but also move apart infinitely often~(as in Fig.~\ref{fig:scrambling}). Intuitively, when scrambling occurs in RNNs, their $\#$linear regions will be large and their input-output Jacobian will have spectral norm larger than one. %however the opposite is not true, as the Jacobian can be large, even though there is neither scrambling nor high expressivity (see Figure~\ref{fig:Jacobian} in Appendix for a simple example).

%say how the height of the linear regions goes to 0 effectively as iterations go to infinity, contrary if period 3 exists then we get 
\begin{definition}
[Simple RNN Model] For $k,\sig > 0$, let $\RNN (k, \sigma^2)$ be a family of recurrent neural networks with the following properties:
\begin{itemize}
\item Input: The input to the network is 1-dimensional, i.e., a single number $x \in [0,1]$.
    \item Hidden Layer: There is only one hidden layer of width at least $k$, with $\relu(x)=\max(x,0)$ activations neurons, each of which has its own bias terms $b_i \sim \unif$.
    \item Output: The output is a real number in $[0,1]$ which takes the functional form $f_k(x) = \clip(\sum_{i=1}^k a_i \relu(x - b_i))$, where the weights are i.i.d. Gaussians $a_i \sim \normsig$, and $\clip(\cdot)$ ensures that $f_k(x) \in [0,1]$ similarly to the input (i.e., it ``clips'' the input so that it remains in $[0,1]$ as it is an RNN). %\footnote{Clipping only reduces the complexity of a model: if an RNN with clipping is chaotic, then a corresponding RNN without clipping will be too. We use it here to make the RNN-dynamical systems analogy more accurate.}
    \item Feedback Loop: $f_k(x)$ becomes the new input number, so after $t$ iterations the output is $(f_k \circ f_k\circ \ldots\circ f_k)(x)$, i.e., $t$ compositions of $f_k$ with itself.
\end{itemize}
\end{definition}

As we show, even this innocuous class of RNNs (see Sec.~\ref{sec:prelims} for general model) leads to scrambling.

%\begin{theorem}\label{th:chaos}
%[Li-Yorke Chaos at Initialization] For any fixed $\sigma>0$, there exists $\del_\s=\del(\s) >0$ and width $k_\s=k(\sigma)>1$, such that any $f \in \RNN(k_\s,\s^2)$ will be Li-Yorke chaotic with probability at least $\del_\s$. 

%Let $f$ be the continuous map computed by one iteration of a recurrent neural netLet $x \in R$ be the input to a 1-dimensional RNN, whose weights $a_i$ are initialized as $a_i\sim \mathcal{N}(0, \frac{\sigma^2}{k})$.
%\end{theorem}

%\clayton{could we just do the corollary here and not the theorem? The current version of theorems in the other document does not have an analogue that holds for any $\sigma > 0$ (even though I think that the new Theorem 1 is more interesting}
\begin{theorem}[Li-Yorke Chaos at Initialization]\label{th:chaos}
Consider $f_k\in \RNN(k_\s,\s^2)$ initialized according to the He normal initialization (set $\sigma^2 = 2$, so weight variance is $2/k$). Then, there exists some constant $\del>0$ (independent of the width) and width $k_{He}>1$, such that for sufficiently large $k>k_{He}$, $f_k$ is Li-Yorke chaotic with probability at least $\del$.
\end{theorem}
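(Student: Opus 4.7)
The strategy is to show that, with constant probability independent of $k$, the random map $f_k$ admits a periodic point of \emph{minimal} period $3$; Li--Yorke's ``period three implies chaos'' theorem then yields Li--Yorke chaos in the sense of Definition~\ref{def:chaos}. The argument thus reduces to a width-independent lower bound on the probability of a concrete geometric event about the shape of $f_k$. The first, deterministic ingredient is a classical reduction from interval dynamics: if two disjoint closed subintervals $I_1, I_2 \subset (0,1)$ satisfy $f_k(I_1) \supseteq I_1 \cup I_2$ and $f_k(I_2) \supseteq I_1$, then a standard pull-back argument (iteratively extracting closed subintervals and invoking the intermediate value theorem) produces a point $p \in I_1$ with $f_k(p) \in I_1$, $f_k^2(p) \in I_2$, and $f_k^3(p) = p$; disjointness of $I_1$ and $I_2$ forbids minimal period $1$ or $2$ for such a $p$.

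Because $f_k$ is continuous, these covering conditions can be certified by evaluating $f_k$ at just four fixed points. Picking $x_1 < x_2 < x_3 < x_4$ in $(0,1)$ and setting $I_1 = [x_1, x_2]$, $I_2 = [x_3, x_4]$, the event
\[
E_k \;=\; \bigl\{\, f_k(x_1) \le x_1,\ f_k(x_2) \ge x_4,\ f_k(x_3) \le x_1,\ f_k(x_4) \ge x_2 \,\bigr\}
\]
implies the covering via the intermediate value theorem applied to $f_k$ on $I_1$ and on $I_2$. Writing $g_k(x) = \sum_{i=1}^k a_i \relu(x - b_i)$ so that $f_k = \clip \circ g_k$, and using that $\clip$ is monotone while all target thresholds lie in $[0,1]$, $E_k$ is equivalent to the same inequalities imposed on the $g_k(x_i)$'s. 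Now $(g_k(x_1), \ldots, g_k(x_4))$ is a sum of $k$ i.i.d.\ centered random vectors with finite moments, so by the multivariate CLT it converges in distribution to a centered Gaussian with covariance
\[
\Sigma_{ij} \;=\; \sigma^2 \int_0^{\min(x_i, x_j)} (x_i - b)(x_j - b)\, db.
\]
This $\Sigma$ is strictly positive definite, since the shifted ReLUs $\{(x_i - \cdot)_+\}_{i=1}^4$ are linearly independent in $L^2([0,1])$, so the limit Gaussian places positive mass on the open polyhedron defining $E_k$; the portmanteau theorem then yields $\Pr[E_k] \ge \delta$ for some width-independent $\delta > 0$ and all $k \ge k_{He}$.

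The most delicate point is ensuring that the periodic orbit from the covering is of \emph{minimal} period $3$: a smaller-period fixed point of $f_k^3$ would not trigger Li--Yorke. The two-interval asymmetric covering $I_1 \to I_1 \cup I_2$, $I_2 \to I_1$ with $I_1 \cap I_2 = \emptyset$ is exactly what forces the itinerary $(I_1, I_1, I_2)$, whose minimal period is $3$. Quantitatively, the width-independence of $\delta$ relies on the non-degeneracy of $\Sigma$ --- which does \emph{not} wash out with $k$ because the RNN's \emph{shared} randomness keeps the map at the $\Theta(1)$ amplitude set by $\sigma$ --- in stark contrast to the feedforward case, where fresh independent randomness across layers would concentrate the relevant statistics and suppress chaos.
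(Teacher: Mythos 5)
Your proof is correct, but it takes a genuinely different route from the paper's. The paper certifies period 3 by tracking the values $y_i=\sum_{\iota<i}a_\iota(b_i-b_\iota)$ of the pre-clip map at the \emph{sorted bias points} (a random-walk picture): it controls the order statistics $b_i\sim\betad(i,k+1-i)$ via Chebyshev to identify ``good'' biases, then does an explicit conditional bivariate Gaussian computation (with $t=2$, $r=288$) to show that with constant probability $y_{k/288}\ge 1$ and $y_k\le 0$, which by its Lemma~\ref{lem:start} pins $f_k=1$ then $f_k=0$ via the clip and yields a fold covering $[0,1]$ twice, hence a period-3 point. You instead evaluate $g_k$ at four \emph{deterministic} points $x_1<x_2<x_3<x_4$, observe that the shared $1/k$-variance weights make $(g_k(x_1),\dots,g_k(x_4))$ a normalized sum of i.i.d.\ centered vectors converging (multivariate CLT) to a nondegenerate Gaussian whose covariance is the Gram matrix of the linearly independent shifted ReLUs, and extract a width-independent $\delta$ by portmanteau on the open polyhedron defining your event; the deterministic step is the standard two-interval covering $f(I_1)\supseteq I_1\cup I_2$, $f(I_2)\supseteq I_1$ with disjoint $I_1,I_2$, which correctly forces \emph{minimal} period 3 and hence Li--Yorke chaos. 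Your route is arguably cleaner for this particular theorem: no order statistics, no explicit constants, and it works verbatim for any fixed $\sigma>0$, while also handling the clip and the minimal-period issue carefully. What it gives up is quantitativeness and extensibility: the CLT step as stated is purely asymptotic (a Berry--Esseen bound would be needed to make $k_{He}$ explicit, whereas the paper carries explicit $O(1/\sqrt{k})$ error terms), and a fixed four-point event cannot yield the high-variance regime of Theorem~\ref{th:mainTh2}, where chaos probability tends to $1$; that is exactly what the paper's multi-scale choice of indices $i_j=k/r^{t-j}$ and near-independence analysis buys. Two cosmetic points: your i.i.d.\ summands have $k$-dependent variance, so phrase the CLT as a triangular array or factor out $\sigma/\sqrt{k}$ explicitly; alternatively, you could condition on the biases, note the quadruple is then exactly Gaussian with the empirical Gram covariance, and invoke the law of large numbers for that covariance, which avoids the CLT altogether.
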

%best of our knowledge, we are first to prove scrambling phenomena for RNNs.
%it happens even in high dimensional RNNs. Here we show formally how one can get such phenomena even in the very simple 1-dimensional case, with constant probability independent of network width. Notica that only in RNNs, not FNNs, can we get scrambling as the definition involves time going to infinity, whereas in FNNs depth is finite, but in RNNs we can forever loop the input through the network. 

 This answers our two questions posed earlier: RNNs may remain chaotic under  initialization heuristics, and maintain their high expressivity, because Li-Yorke chaos implies an exponential $\#$linear regions~(Theorem 1.5 in~\cite{iclr}). Next, we focus on threshold phenomena:

%\begin{theorem}
%(Persistence of Chaos)
%\end{theorem}
%if  variance is o(1/k), can we argue the map is a contraction?

\begin{theorem}\label{th:mainTh2}
[Order-to-Chaos Transition] For $f_k \in \RNN (k, \sigma^2)$, we get the following 3 regimes as we vary the variance of the weights $a_i \sim \normsig$:
\begin{itemize}
    \item (Low variance, order) Let $a_i \sim \mathcal{N}(0, \frac{1}{4k\log k})$. Then, the probability that $f_k$ is Li-Yorke chaotic is at most $\frac1k$.
    % (Low-variance, Order) Let $a_i \sim \mathcal{N}(0, \frac{\Theta(1)}{4 k \log k})$: Then, the probability that $f_k$ is Li-Yorke chaotic is at most $\frac1k$.
    \item (Edge of chaos) Let $a_i \sim \mathcal{N}(0, \frac{2}{k})$ (i.e., He initialization). Then, $f_k$ is Li-Yorke chaotic with constant (albeit small) probability. %\clayton{Again, probably swap $\sigma^2$ for 2}
    \item (High-variance, chaos) Let $a_i \sim \mathcal{N}(0, \frac{\sigma^2}{k})$ for $\sigma = \omega(1)$. Then, $\lim_{k \to \infty} \pr{f_k \text{ is Li-Yorke chaotic}} = 1.$
    % (High-variance, Chaos) Let $a_i \sim \mathcal{N}(0, \omega(\frac1k))$. Then, $\lim_{k \to \infty} \pr{f_k \text{ is Li-Yorke chaotic}} = 1$.
\end{itemize}
\end{theorem}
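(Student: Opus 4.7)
The plan hinges on a classical result of 1D interval dynamics: a continuous map $f:[0,1]\to[0,1]$ is Li-Yorke chaotic if and only if it admits some periodic orbit whose period is not a power of $2$; in particular, a period-$3$ orbit suffices, and so does any \emph{horseshoe} configuration, meaning disjoint subintervals $J_1,J_2\subseteq[0,1]$ with $f(J_1)\cap f(J_2)\supseteq J_1\cup J_2$. Conversely, if $f$ is a strict contraction then its only periodic orbit is its unique fixed point, ruling out Li-Yorke chaos. The three regimes are handled separately along these lines, with the edge-of-chaos case following immediately by applying Theorem~\ref{th:chaos} at $\sigma^2=2$.

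For the low-variance regime, the plan is to show that $f_k$ is a strict contraction with probability $1-O(1/k)$. The map $f_k$ is piecewise-linear with breakpoints at the order statistics $b_{(1)}<\dots<b_{(k)}$ and with slope in the $j$-th region equal to $S_j=\sum_{i=1}^j a_{(i)}$, where $a_{(i)}$ is the weight attached to $b_{(i)}$; since the clip operation is a $1$-Lipschitz post-composition, the Lipschitz constant of $f_k$ is bounded by $\max_{0\leq j\leq k}|S_j|$. Each $S_j$ is a centered Gaussian of variance at most $k\cdot\tfrac{1}{4k\log k}=\tfrac{1}{4\log k}$, so a standard Gaussian tail bound gives $\Pr[|S_j|>1]\leq 2\exp(-2\log k)=2/k^2$, and a union bound over the $k+1$ partial sums yields failure probability $O(1/k)$. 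On the complementary event, $f_k$ is a contraction and hence cannot be Li-Yorke chaotic.

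For the high-variance regime, the plan is to exhibit a horseshoe with probability $1-o(1)$. Consider the unclipped function $g_k(x)=\sum_i a_i\relu(x-b_i)$: its slopes $(S_j)$ form a centered Gaussian random walk with step variance $\sigma^2/k$, so since $\sigma=\omega(1)$ the total variance $\sigma^2$ diverges and $|S_j|$ attains values $\gg 1$ at many indices while changing sign many times. Consequently $g_k$ has amplitude $\Theta(\sigma)$ and oscillates repeatedly, crossing the band $[0,1]$ many times in both directions. Each transversal crossing of $[0,1]$ by $g_k$ produces a monotone subinterval on which the clipped $f_k$ sweeps from $0$ to $1$ (or from $1$ to $0$); two disjoint such sweeps give intervals $J_1,J_2\subseteq[0,1]$ with $f_k(J_1)=f_k(J_2)=[0,1]\supseteq J_1\cup J_2$, which is a horseshoe, yielding a period-$3$ orbit and hence Li-Yorke chaos.

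The main obstacle is the high-variance case, specifically making rigorous that with probability $1-o(1)$ at least two disjoint complete sweeps exist. Pointwise estimates only control the distribution of $g_k$ at a single $x$; a joint statement over the whole interval requires either chaining union bounds across the $k$ linear pieces while tracking the random-walk trajectory $(S_j)$, or invoking a Donsker-type approximation of $(S_j)$ by Brownian motion as $k\to\infty$ and $\sigma\to\infty$, coupled with control of the $\Theta(1/k)$ spacings of the uniform order statistics $(b_{(i)})$ so that sign-changes of the slopes translate into actual $[0,1]$-crossings of $g_k$ rather than small oscillations near the clip thresholds.
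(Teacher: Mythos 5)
Your treatment of the low-variance bullet is sound and takes a genuinely different route from the paper: you bound the Lipschitz constant by the partial sums $S_j$ of the ordered weights and invoke contraction, whereas the paper (Theorem~\ref{th:below}) instead shows that with probability $1-1/k$ the pre-clipped values at the breakpoints satisfy $y_i<b_i$ for every $i$, so the graph lies below the diagonal and every orbit sinks to $0$; both yield asymptotic (hence non-scrambled) pairs. Two small repairs to yours: the event $\max_j|S_j|\le 1$ only gives a $1$-Lipschitz map, so either note that the maximum is almost surely strictly below $1$, or observe that nonexpansiveness already forbids Li--Yorke pairs because $|f_k^n(x)-f_k^n(y)|$ is monotone in $n$; and the two-sided tail plus union bound gives $\approx 2/k$ rather than the stated $1/k$ unless you use the Mills-ratio form of the Gaussian tail. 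Delegating the middle bullet to Theorem~\ref{th:chaos} is legitimate, since that is proved separately (Theorem~\ref{th:he}). One incorrect side claim: ``Li--Yorke chaotic iff some period is not a power of $2$'' is false as an equivalence (there are zero-entropy maps of type $2^{\infty}$ that are Li--Yorke chaotic); you only use the true direction, so this is harmless.

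The genuine gap is the high-variance bullet, which you flag yourself but do not close, and it is precisely where the paper's technical work lies. Your reduction --- find a point where the unclipped $g_k$ exceeds $1$ followed by a point where it is negative, giving two full sweeps of $[0,1]$ and hence period $3$ --- is exactly the paper's Lemma~\ref{lem:start}. But the probabilistic statement that this happens with probability $1-o(1)$ is a \emph{joint} statement about the values $y_i=\sum_{\iota<i}a_\iota(b_i-b_\iota)$, which form an integrated random walk (not the slope walk $S_j$) with increments weighted by the random gaps of the uniform order statistics: sign changes of the slopes do not automatically translate into crossings of the band $[0,1]$, nearby $y_i$'s are strongly correlated, and atypical bias configurations must be excluded. ``Chaining union bounds or a Donsker-type approximation'' is a direction, not an argument, and the Donsker route would still have to handle the required ordering (a value above $1$ \emph{before} a value below $0$), the simultaneous escape of all checkpoint values from $[0,1]$ (which is what forces $\sigma$ large), and uniformity in $k$. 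The paper's proof of Theorem~\ref{thm:main} is essentially the execution of this step: geometric checkpoints $i_j=k/r^{t-j}$, Chebyshev concentration of the Beta-distributed biases, covariance bounds making the normalized values nearly uncorrelated, eigenvalue and orthant-probability estimates (Lemmas~\ref{lem:eig} and~\ref{app:proof1}) showing a $+$ precedes a $-$ except with probability about $(t+1)2^{-t}$, and $\sigma\gtrsim t r^{3t}/\delta$ so that no checkpoint lands in $[0,1]$, followed by a diagonalization to get the $\sigma=\omega(1)$ corollary. Without carrying out some version of this decorrelation-plus-escape analysis, the third regime remains unproved.
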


To the best of our knowledge, we are the first to study Li-Yorke chaos in the context of RNNs and to prove that scrambling phenomena appear with constant probability independent of the width. We emphasize that Li-Yorke chaos is stronger than exploding gradients, as any chaotic system requires the input-output map to be non-contracting (as otherwise it would converge to a fixed point). 

\paragraph{Techniques}
Contrary to many prior works~\citep{laurent2016recurrent,miller2018stable}, we do not require differentiability of activations like $\tanh(\cdot)$, as we rely on properties of continuous functions. Most importantly, we deviate from past works~\citep{bertschinger2004real,poole2016NIPS}, as we do not use tools from mean-field analysis (where the number of neurons, width or depth goes to infinity), since here we are interested on the dependence on the width $k$. Instead, we study the fixed points and higher-order fixed points of $f_k\in \RNN(k,\s^2)$ (see~Sec.~\ref{sec:prelims}), bounding the probability that $f_k$ has a certain type of fixed point known to lead to chaos. 

This is done by viewing RNNs as random walks under non-linear activations. To give some intuition behind our findings, consider a random walk starting from $0$ in which we consider increments $d_i \sim \mathcal{N}(0,1)$. It is quite well-known in the literature of random walks, that the expected number of steps needed for a random walk starting from $0$ with increments $d_i \sim \mathcal{N}(0,1)$ to reach $n$ and come back to $0$ is $\Theta(n^2)$ iterates. For an RNN to be chaotic, as it turns out, we need to find the probability that it has certain type of fixed points; this can be bounded below by the probability that the random walk reaches point $1$ and then goes back to $0$. As a result, if we rescale appropriately and the increments $d_i \sim \mathcal{N}(0,\frac{1}{k})$, then we need $\Theta(k)$ iterates. Roughly speaking, $k$ plays the role of the size of the width ($d_i$ are the weights). In reality, the random walk we analyze is more complicated as the (random) biases affect the length of each increment in the random walk.
%EXPLODING GRADIENT IS IMPLIED

%Interpretation: edge of chaos: prove for scrambling, not mean field analysis, clean definition of chaos not just exploding gradients.

%EXPECTED LENGTH GOES TO INFINITY: COROLLARY OF SCRAMBLING
%FAMILY OF FUNCTIONS RNNS? CONSTANT PROB. OF CHAOS INSIDE THIS FAMILY?
%CONTRAST WITH HANIN: size of 1/width, we show directly...
%not mean field analysis
%finite widths depths
%intuition why INDEPENDENT of width the probability.

\paragraph{Experiments}
Finally, we validate our theory by extensive simulations on a variety of randomly initialized RNN architectures. Our goal is twofold: Firstly, to demonstrate that Li-Yorke chaos is indeed present across different models and different initialization heuristics, and not just an artifact of the specificities of our $\RNN$ family. Secondly, to estimate the value of the aforementioned constant probability of chaos $\del_\s$ in the Edge of Chaos regime, which is known to be the most interesting~\citep{bertschinger2004real,yang2017mean,hanin2018neural}, yielding a comparison for which initializations are more likely to be Li-Yorke chaotic~(see~table in~Fig.~\ref{fig:table}).

\subsection{Further Related Works}
Several works have studied RNNs in the context of chaotic dynamical systems and edge of chaos initialization~\citep{sompolinsky1988chaos,bertschinger2004real,saxe2013exact,kadmon2015transition,poole2016NIPS}. They study neural networks in the limit of infinitely large widths or depth and derive properties of the dynamics, typically using mean-field analysis. Moreover, the notion of chaos used there is not from~\cite{li1975period}, as they do not analyze periodic points in the trajectories of the networks, nor do they establish scrambling  phenomena. For example,~\citet{bertschinger2004real} observe that the sequence of activations in the neurons of the network exhibit increasingly unpredictable and complicated patterns as the variance increases and use this notion to determine the edge of chaos and threshold phenomena.

\section{Preliminaries}\label{sec:prelims}
%We give some necessary results from the literature on discrete time dynamical systems, and later describe related RNN models.
\paragraph{Higher-order Fixed Points} Let $f:[a,b] \to [a,b]$ be a continuous map. 
The notion of a \textit{periodic} point is a generalization of a \textit{fixed} point: 
\begin{definition}
We say $f$ contains period $n$ or has a point of period $n \ge 1$, if there exists a point $x_0 \in [a,b]$ such that:
$$
	f^{n}(x_0)=x_0 {\text{ \ \ and\ \ \ }} f^{t}(x_0)\neq x_0, \forall\ 1 \le t \le n-1.
$$
where it is common to use $f^{n}(x_0)$ to denote the composition of $f$ with itself $n$ times, evaluated at point $x_0$. In particular, $C := \{x_0,f(x_0), f(f(x_0)),\dots, f^{n-1}(x_0)\}$ has $n$ distinct elements (each of which is a point of period $n$) and is called a cycle (or orbit) with period $n$. 
\end{definition}
%\vspace{-0.3cm}
\paragraph{Period 3 implies Chaos} Of particular interest, is the case of functions $f$ that contain period $3$. In 1975, a seminal paper by~\citet{li1975period}, introduced the term ``chaos'' as used in mathematics nowadays, and proved connections with periodic functions as defined above.

\begin{fact}
Let $f: \R\to\R$ be continuous. If $f$ has a period 3 point then two properties hold: First, $f$ contains points of any period $n\in \N$. Second, $f$ will exhibit scrambling as defined in Def.~\ref{def:scramble},\ref{def:chaos}.
\end{fact}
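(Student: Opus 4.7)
The plan is to follow the classical argument of \cite{li1975period}. Let $p$ be a period-$3$ point of $f$, so that $p, f(p), f^2(p)$ are three distinct points with $f^3(p) = p$, and relabel them as $a < b < c$. Up to replacing $f$ by $f^2$, I may assume $f(a) = b$, $f(b) = c$, $f(c) = a$. Applying the intermediate value theorem on each of the two compact intervals $I_0 := [a,b]$ and $I_1 := [b,c]$ immediately yields the two covering relations $f(I_0) \supseteq I_1$ and $f(I_1) \supseteq I_0 \cup I_1$. These two relations are the entire combinatorial engine of what follows.

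The key tool is a coverage lemma: if $J$ is a compact interval and $f(J) \supseteq K$ for another compact interval $K$, then some compact subinterval $J' \subseteq J$ satisfies $f(J') = K$; this follows from pulling the endpoints of $K$ back through $f|_J$ and using connectedness. Iterating along any finite sequence $J_0, J_1, \ldots, J_n$ with $f(J_i) \supseteq J_{i+1}$, together with Brouwer's fixed point theorem applied to the resulting nested preimage intervals, produces a point whose orbit visits $J_0, \ldots, J_n$ in order and, if $J_n = J_0$, returns to $J_0$ after $n$ steps. For Part 1 (all periods are realized), I would apply this for each $n \geq 1$ to the itinerary $(I_1, I_1, \ldots, I_1, I_0, I_1)$ of length $n+1$; the single visit to $I_0$ at a distinguished slot forces the minimal period to be exactly $n$ rather than a proper divisor.

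For Part 2 (the uncountable scrambled set), the plan is to construct an uncountable family of itineraries in $\{I_0, I_1\}^{\N}$ parameterized, for instance, by an irrational $\theta \in (0,1)$ controlling the density and placement of the rare $I_0$ symbols among long blocks of $I_1$'s. Each itinerary realizes, via the coverage lemma, a point $x_\theta$ whose orbit follows it. Separation (the non-asymptotic condition) is the easier half: the positive gap between $I_0$ and $I_1$, together with the infinitely many positions at which any two itineraries disagree, forces $\limsup_n d(f^n(x_\theta), f^n(x_{\theta'})) > 0$. The main obstacle is proximality: one must show that for any two such points, arbitrarily long common initial segments of their itineraries force their orbits to lie simultaneously in nested preimage intervals whose diameters tend to $0$, so that $\liminf_n d(f^n(x_\theta), f^n(x_{\theta'})) = 0$. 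Carrying this out relies on exploiting the double covering $f(I_1) \supseteq I_0 \cup I_1$, which provides the combinatorial redundancy needed to shrink preimage diameters along long common prefixes, and is the delicate technical step of the entire argument.
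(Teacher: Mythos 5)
The paper offers no proof of this Fact --- it is the Li--Yorke theorem itself, quoted from \citet{li1975period} (with the Sharkovsky connection noted) --- so the only meaningful comparison is with the classical argument you are reconstructing. Your skeleton for Part 1 (the covering relations $f(I_0)\supseteq I_1$, $f(I_1)\supseteq I_0\cup I_1$, the coverage lemma, and loops with a single visit to $I_0$) is indeed the standard one, but two steps are off. First, the reduction ``up to replacing $f$ by $f^2$'' does not work for Part 1: if $f^2$ has points of every period $n$, you only learn that $f$ has a point of period $n$ or $2n$, so all periods for $f^2$ does not yield all periods for $f$. The correct (and easier) fix is that in the other orientation, $f(a)=c$, $f(c)=b$, $f(b)=a$, the same two covering relations hold with the roles of $[a,b]$ and $[b,c]$ exchanged, so no passage to $f^2$ is needed; the $f^2$ reduction would be legitimate only for Part 2, where proximal/non-asymptotic for $f^2$ transfer to $f$. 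Second, the minimality claim ``a single visit to $I_0$ forces period exactly $n$'' needs the shared endpoint $b\in I_0\cap I_1$ excluded (the loop construction could a priori return the point $b$), and $n=1,2$ require separate short arguments; routine, but currently missing.

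The genuine gaps are in Part 2. (i) Your separation argument invokes ``the positive gap between $I_0$ and $I_1$,'' but there is no gap: $I_0=[a,b]$ and $I_1=[b,c]$ abut at $b$, so two orbits whose itineraries disagree at some time can both be arbitrarily close to $b$ at that time, and no lower bound on the $\limsup$ follows from symbol disagreement alone. (ii) For proximality, the proposed mechanism --- long common itinerary blocks force the points into nested preimage intervals ``whose diameters tend to $0$'' --- is unjustified: $f$ is only continuous, with no expansivity or piecewise-monotonicity assumed, so the set of points realizing a given finite (or even infinite) itinerary is an interval whose length need not shrink at all; symbolic agreement never implies metric closeness in this generality, and the ``combinatorial redundancy'' of the double covering does not repair this. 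Li and Yorke's actual construction is engineered precisely to get around both difficulties: the $I_0$-symbols are placed only at designated sparse times, with their density parameterized by a continuum of values $r$, and both halves of scrambling ($\liminf=0$ and $\limsup>0$) are proved by quantitative estimates on where the orbit points must sit relative to distinguished points of the $3$-cycle at those designated times --- not by shrinking symbolic cylinders and not by a gap between the two covering intervals. So your plan correctly identifies the delicate step, but the idea you propose to close it would fail as stated.
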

Sometimes we say that $f$ is ``chaotic in the sense of Li and Yorke''. On a historical note, this turned out to be a special case of an older result by~\citet{sharkovsky1964coexistence}.

In other words, a sufficient condition to obtain scrambling phenomena is to show that $f$ contains a point of period 3. In general, the uncountable set of chaotic points may, however, be of measure zero, in which case the map is said to have unobservable chaos or unobservable nonperiodicity~\citep{collet2009iterated}. Our Theorem~\ref{th:chaos} informs us that this is not the case for randomly initialized RNNs.

%For higher dimensions, check~\cite{}.

\paragraph{Other RNN Models}
%general model of RNNs: 
%instantiation to ours, identity on first layer, 
%absence of external input: capturing the model see Laurent 
Recurrent models have been thoroughly studied because of their success in sequence learning. Following notation from~\citet{laurent2016recurrent}, here is a general model for RNNs:
\[
u_t=\Phi(u_{t-1},W_1x_t,W_2x_t,\ldots,W_kx_t)
\]
where the index $t\in N$ represents the current time, $u_t\in \R^d$ represents the current state of the system, $x_t$ the current external input, and $W_i$'s are weight matrices.  Regarding training or initialization of weights many tricks have been proposed, e.g., the popular ``identity matrix'' trick of~\citet{le2015simple} to preserve lengths. \citet{laurent2016recurrent} to simplify the above, they study what is referred to as the \textit{dynamical system induced} by the RNN, i.e, they consider trajectories of $u_t$ in the absence of external inputs:
$$
u_t=\Phi(u_{t-1},0,0,\ldots,0)
$$
This is much more tractable and gives us insights into the architecture of the RNN as it decouples the influence of the external data $x_t$'s (with which any possible response can be produced) from the model itself.
For more on other simplified RNN variants with specific statistical assumptions on weights/input, and connections to edge of chaos, see~\cite{bertschinger2004real}. Similarly, our $\RNN$ family has no external input; we show that even in 1-dimensional settings, chaos emerges.

%We refer to Bertschinger & Natschlager (2004) for a study of the chaotic behavior of a simplified ¨
%vanilla RNN with a specific statistical model, namely an i.i.d. Bernoulli process, for the input data as
%well as a specific statistical model, namely i.i.d. Gaussian, for the weights of the recurrence matrix.

%\newpage
\vspace{-0.15cm}
\section{Persistence of Chaos and Phase Transitions}
\vspace{-0.15cm}
%warm up for random piecewise maps?
In this section we prove our main results. 
As we alluded to in the previous section, in order to show chaos in the sense of Li and Yorke, we will bound the probability that the family $\RNN(k,\s^2)$ contains a point of period 3, since this will serve as a ``witness of chaos''. We start with some simple observations.
%As we alluded to period 3 being the reason for why Starting point is the influential work of Telgarsky that used a triangular wave function to obtain depth-width tradeoffs.

\subsection{Triangular Wave with Period 3}

Perhaps the simplest example of a continuous function $f$ that has period 3 is the following triangle wave function: $f(x) = 
      2x,$ for  $x\in[0,1/2],$ and $2(1-x),$ for $\ x\in(1/2,1]$.
Indeed, one can check that the set $\{\tfrac29,\tfrac49,\tfrac89\}$ is a periodic orbit of length 3, since $f(\tfrac29)=\tfrac49,f(\tfrac49)=\tfrac89$ and finally, $f(\tfrac89)=\tfrac29$ closing the loop. This simple function has played an interesting role in obtaining depth vs width tradeoffs in recent years~\cite{telgarsky15,Telgarsky16,iclr}.
\vspace{-0.1cm}
\subsection{Persistence of Chaos}
Now consider a function from $\RNN$: $f_k(x) = \clip(\sum_{i=1}^k a_i \relu(x - b_i))$ for independent $a_i \sim \normsig$ and $b_i \sim \unif$.

\begin{theorem}\label{thm:main}
    Fix any $\delta\in (0, 1)$. There exists $\sigma$ such that for any sufficiently large $k$ (whose minimum value depends on $\delta$), $f_k$ is 3-periodic with probability at least $1 - \delta$.
\end{theorem}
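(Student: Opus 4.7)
The plan is to reduce $3$-periodicity of $f_k$ to a topological horseshoe condition on the unclipped random function $S_k$, and then verify that condition with high probability by passing to the integrated-Brownian-motion limit.

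\textbf{Topological reduction.} A sufficient condition for $f_k$ to have a period-$3$ point is the existence of $p_1 < p_2 < p_3 \in [0,1]$ with $f_k(p_1) = f_k(p_3) = 1$ and $f_k(p_2) = 0$. Shrinking slightly to strictly disjoint intervals $J_0 \subset [p_1, p_2]$ and $J_1 \subset [p_2, p_3]$, continuity and the intermediate value theorem give $f_k(J_0) \supseteq J_0 \cup J_1$ and $f_k(J_1) \supseteq J_0$. A standard nested-interval argument then produces $x^\star \in J_0$ with $f_k(x^\star) \in J_0$, $f_k^2(x^\star) \in J_1$, and $f_k^3(x^\star) = x^\star$; disjointness of $J_0$ and $J_1$ forces the period to be exactly $3$. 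Because $f_k = \clip(S_k)$, the sufficient condition becomes: there exist $p_1 < p_2 < p_3 \in [0,1]$ with $S_k(p_1) \geq 1$, $S_k(p_2) \leq 0$, and $S_k(p_3) \geq 1$.

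\textbf{Rescaling and Brownian-motion limit.} Setting $a_i = (\sigma/\sqrt k)\tilde a_i$ with $\tilde a_i \sim \mathcal{N}(0,1)$ factors out the variance: $S_k = \sigma T_k$, where $T_k(x) = (1/\sqrt{k}) \sum_i \tilde a_i \relu(x - b_i)$ has a law independent of $\sigma$. The piecewise-constant slope process between consecutive bias order statistics, $V_k(x) = (1/\sqrt k)\sum_{i:\, b_i \leq x}\tilde a_i$, is Gaussian in the limit with $\mathrm{Cov}(V_k(x), V_k(y)) = \min(x,y)$ and converges weakly to a standard Brownian motion $B$; since the limit is continuous, the convergence is in fact uniform. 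Consequently $T_k(x) = \int_0^x V_k(t)\, dt \Rightarrow Y(x) := \int_0^x B(t)\, dt$ in $C[0,1]$, where $Y$ is integrated Brownian motion.

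\textbf{Sign changes of $Y$ accumulate at $0$.} The process $Y$ is self-similar with $Y(cx) \stackrel{d}{=} c^{3/2} Y(x)$, so $\Pr[Y \text{ has a sign change in } (0, \varepsilon]]$ does not depend on $\varepsilon > 0$ and equals a positive constant $p$ (since, e.g., $Y(1/2) > 0$ and $Y(1) < 0$ hold jointly with positive probability, forcing a zero in between). The event ``$Y$ has a sign change in $(0, \varepsilon]$ for every $\varepsilon > 0$'' belongs to the germ $\sigma$-field $\bigcap_{\varepsilon > 0} \sigma(B_s: s \leq \varepsilon)$, so by Blumenthal's $0$-$1$ law it has probability $0$ or $1$; since the limit of the nested probabilities equals $p > 0$, it equals $1$. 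Hence almost surely $Y$ has infinitely many sign changes on $(0,1]$, and for any $\delta > 0$ there exists $\tau(\delta) > 0$ such that with probability at least $1 - \delta/2$ one finds $p_1 < p_2 < p_3$ with $Y(p_1), Y(p_3) \geq \tau$ and $Y(p_2) \leq 0$.

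\textbf{Assembly and main obstacle.} Given $\delta$, pick $\tau$ as above and $\sigma \geq 1/\tau$. The event in question for $T_k$, namely $\{\exists p_1 < p_2 < p_3:\, T_k(p_1), T_k(p_3) > 1/\sigma,\ T_k(p_2) < 0\}$, is open in $C[0,1]$ and has $Y$-null boundary for generic $\tau$, so the Portmanteau theorem applied to $T_k \Rightarrow Y$ yields, for all sufficiently large $k$, that this event holds with probability at least $1 - \delta$. By the horseshoe reduction, $f_k$ has period $3$ with probability at least $1 - \delta$. The main obstacle is the quantitative third step: turning the qualitative ``infinitely many sign changes a.s.'' statement into an explicit $\tau(\delta)$ requires controlling the joint distribution of $Y$'s sign-change locations and amplitudes. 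A natural alternative is to bypass the weak-convergence route entirely and, directly on the random walk $V_k$ at finite $k$, count macroscopic excursions of the integrated process that cross both levels $0$ and $1$; this should give an explicit rate for the minimal $k$ in terms of $\delta$ and $\sigma$ via reflection-principle estimates.
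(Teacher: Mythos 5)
Your proposal is correct in outline but takes a genuinely different route from the paper's. The paper works entirely at finite $k$: it uses the two-point witness of Lemma~\ref{lem:start} (a pre-clip value $y_i>1$ followed by $y_\ell<0$, combined with $f_k(0)=0$), and establishes it by choosing the geometric subsequence of indices $i_j=k/r^{t-j}$, computing the variances and covariances of the Gaussians $y_{i_j}$ explicitly, showing they are nearly uncorrelated (off-diagonal covariance at most $1/t^4$), and lower-bounding the probability of a $+$-before-$-$ sign pattern via eigenvalue bounds on the covariance matrix; this produces explicit parameters $t=6/\delta$, $r=(12t)^8$, $\sigma\ge tr^{3t}/\delta$ and an explicit threshold on $k$. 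You instead pass to the scaling limit: after factoring out $\sigma$, the pre-clip function converges weakly in $C[0,1]$ to integrated Brownian motion, whose sign changes accumulate at the origin almost surely (self-similarity plus Blumenthal's $0$--$1$ law), and the Portmanteau theorem transfers a margin-$\tau$ version of the sign pattern back to finite $k$. Your route is conceptually cleaner and identifies the limiting object that explains the phenomenon, at the cost of being non-quantitative in both $\sigma(\delta)$ and the threshold on $k$ (which you acknowledge); the paper's finite-$k$ covariance bookkeeping is also what powers the fixed-$\sigma$ edge-of-chaos statement (Theorem~\ref{th:he}) and the ordered regime (Theorem~\ref{th:below}), which a purely asymptotic argument does not immediately give. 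Two points to tighten, neither a real gap: (i) in the horseshoe step, ``shrinking slightly to strictly disjoint intervals'' does not by itself preserve the covering relations; it is cleaner to take $J_0=[p_1,p_2]$ and $J_1=[p_2,p_3]$ directly, note $f_k(J_0)\supseteq[0,1]\supseteq J_0\cup J_1$ and $f_k(J_1)\supseteq[0,1]\supseteq J_0$ by the intermediate value theorem, and rule out period $1$ by observing that a period-$1$ itinerary would force the fixed point to lie in $J_0\cap J_1=\{p_2\}$ while $f_k(p_2)=0\ne p_2$; (ii) the functional CLT needs slightly more care than ``the limit is continuous, so convergence is uniform,'' since the slope process is discontinuous and its conditional covariance is the empirical CDF of the biases (converging to $\min(x,y)$ only via Glivenko--Cantelli); either integrate the Skorokhod-convergent slope process (integration is continuous from $D[0,1]$ to $C[0,1]$) or verify tightness of $T_k$ directly from the moment bound $\EE{(T_k(x)-T_k(y))^4}\le 3\,|x-y|^4$.
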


For the theorem we need some intermediate lemmas. We begin with a sufficient condition for chaos:

\begin{lemma}\label{lem:start}
Let $y_i = \sum_{\iota = 1}^{i-1} a_\iota (b_i - b_\iota)$ so that $f_k(b_i) = \clip(y_i)$. If there exists indices $i < \ell$ such that $y_i > 1$ and $y_\ell < 0$, then $f_k$ has period 3.
\end{lemma}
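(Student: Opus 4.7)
The approach is to show that under the hypothesis, $f_k$ admits the classical Li-Yorke/Block--Coppel ``horseshoe'' configuration that forces a period-$3$ orbit, and then invoke the fact stated above that period $3$ implies Li-Yorke chaos.

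First I would clean up the setup: with probability $1$ the biases are distinct, so after relabeling assume $b_1 < b_2 < \cdots < b_k$. Under this ordering, $f_k(b_m) = \clip(y_m)$ for each $m$, so the hypothesis yields $f_k(b_i) = 1$ and $f_k(b_\ell) = 0$ with $b_i < b_\ell$. Moreover $f_k(0) = \clip(0) = 0$ trivially since each $\relu(-b_m) = 0$. Applying IVT to $x \mapsto f_k(x) - x$ on $[b_i, b_\ell]$, which is $1 - b_i > 0$ at $b_i$ and $-b_\ell < 0$ at $b_\ell$, produces an interior fixed point $p \in (b_i, b_\ell)$.

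Next, setting $K := [0, p]$ and $L := [p, 1]$, I would verify two covering relations via IVT. On $K$ the values $f_k(0) = 0$ and $f_k(b_i) = 1$ with $0, b_i \in K$ force $f_k(K) \supseteq [0, 1] = K \cup L$. On $L$ the values $f_k(p) = p$ and $f_k(b_\ell) = 0$ with $p, b_\ell \in L$ force $f_k(L) \supseteq [0, p] = K$. These establish the transitions $K \to K$, $K \to L$, $L \to K$ in the Markov graph of covering relations, whose minimal $3$-cycle $K \to K \to L \to K$ gives a period-$3$ horseshoe.

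I expect the main obstacle to be the final step: converting the covering-relation horseshoe into a genuine period-$3$ point of $f_k$. While this implication is classical (Block and Coppel, \textit{Dynamics in One Dimension}), one must argue that the periodic orbit has minimal period exactly $3$ and does not collapse to the shared boundary point $p$, the only common fixed point in $K \cap L$. A clean route is to isolate subintervals $\tilde K \subseteq K$ with $f_k(\tilde K) = L$ and $\tilde L \subseteq L$ with $f_k(\tilde L) = K$, pull back once more inside $K$ to obtain a subinterval whose itinerary is $(K, K, L)$, and apply an IVT fixed-point argument to $f_k^3$ restricted there. The remainder of the argument---ordering the biases, the IVT fixed point, and the two covering relations---is mechanical given the hypothesis.
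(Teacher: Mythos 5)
Your proposal is correct and follows essentially the same route as the paper: order the biases, observe $f_k(0)=0$, $f_k(b_i)=1$, $f_k(b_\ell)=0$, derive two covering relations between subintervals of $[0,1]$, and extract a period-$3$ point from an IVT argument applied to $f_k^3$. Your version is in fact slightly more careful than the paper's (which leaves the covering-to-periodic-orbit step and the minimal-period issue implicit), but the underlying argument is the same.
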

\begin{proof}
    We can assume without loss of generality that $b_1 < b_2 < \dots < b_k$. Suppose there are indices $i < \ell$ as specified in the statement. Since $y_i>1$, after clipping we know that $f_k(x_i)=1$, and since $y_\ell <0$, after clipping we know that $f_k(x_\ell)=0$.
   Let's check how intervals are mapped under $f$: since $x_i \leq x_\ell \in [0, 1]$ and because of continuity of $f_k$, we have that $f_k([0, x_i]) = [0, 1]$, and similarly that $f_k([x_i, x_\ell]) = [0,1]$. By combining this with the intermediate value theorem for $f_k^3$ (3 iterations of $f_k$ applied to itself), we can exhibit a 3-cycle with two elements in $(0, x_i)$ and a third in $(x_i, x_\ell)$.
\end{proof}

\begin{proof}[Proof Sketch of Theorem~\ref{thm:main}]
Roughly, the proof operates by choosing a subset of $t$ ordered elements $\{y_{i_j}\}_{j \in [t]} \subset \{y_i\}_{i \in [k]}$ that meets several key properties such that there exists $j < j'$ such that $y_j > 1$ and $y_{j'} < 0$. Using this and Lemma~\ref{lem:start} (for $i=j$ and $\ell=j'$), we will conclude that $f_k$ has period 3. However, there are three immediate challenges that make such a result challenging to obtain:
\begin{enumerate}
    \item (Randomness) If the variance parameter $\sigma$ is small, then there will be a substantial constant probability of an $y_j$ lying in the interval $[0, 1]$ and thus meeting neither characteristic.
    \item (Correlations) Due to the function's expression as a linear combination of ReLU neurons with small coefficients, the function $f_k$ is smooth, and $y_i$'s in the same vicinity are highly correlated. This makes it more difficult to ensure that sign changes to the $y_j$'s are likely.
    \item (Biases) The randomness of the biases $b_i$ makes it more difficult to assess the correlations of the $y_i$'s, since having small gaps $b_{\ell} - b_i$ will yield stronger correlations between the two consecutive values of the function.
\end{enumerate}

We mitigate these issues by carefully selecting parameters $t$ and $\sigma$ and the subset $\{y_{i_j}\}_{j \in [t]}$. 
Problems from (1) can be avoided by choosing $\sigma$ to be large enough to ensure that for sufficiently large $i$ relative to $k$, it's highly unlikely for any $y_{i_j}$ to lie in the interval $[0, 1]$; if none of these lie in that interval, then it is sufficient to find $y_{i_j}$ and $y_{i_{j'}}$ that exhibit a positive-to-negative sign change.
To minimize correlation for (2) between nearby $y_{i_j}$'s, we choose those $i_j$'s to ensure that each is a large multiplicative factor times its predecessor; this relies on key intuition from random walks, where increasing temporal differences between random variables makes them increasingly independent.
We avoid the atypical samples of $b_i$ of (3) by gradually defining a set of ``good'' $b_i$'s that exist with high probability and completing the proof while assuming the $b_i$'s are good.
\end{proof}

\begin{proof}[Proof of Theorem~\ref{thm:main}]
A primary goal of our analysis is to bound the variances and covariances of $y_i$'s to aid in our selection of a subset of $y_{i_j}$'s that are nearly independent.
We first assume that the biases $b_i$ are held fixed, while the weights $a_i$ are chosen at random; we will later prove bounds that hold for all typical biases.
We let $\EE[a]{\cdot}$ denote such an expectation over random $a$ for fixed $b$.
% We compute the variances and covariances of each $y_i$ for fixed $b_\iota$'s and random $a_\iota$'s. 
For indices $i < \ell$, we have that $\EE[a]{y_i} = 0$ and also the following:
\begin{align*}
    %\EE[a]{y_i} &= 0 \text{ and }
    \EE[a]{y_i^2}
    = \sum_{\iota=1}^{i - 1} \EE{a_\iota^2} (b_i - b_\iota)^2 
    = \frac{\sigma^2}{k}  \sum_{\iota-1}^{i - 1}  (b_i - b_\iota)^2 \text{ and }  
    \EE[a]{y_i y_{\ell}}
    = \frac{\sigma^2}k \sum_{\iota=1}^{i-1}  (b_i - b_\iota)(b_\ell - b_\iota).
\end{align*}
For fixed $b_i$'s, each $y_i= \sum_{\iota = 1}^{i-1} a_\iota (b_i - b_\iota)$ is drawn from a Gaussian distribution with mean $0$ and variance $ \frac{\sigma^2}{k}  \sum_{\iota-1}^{i - 1}  (b_i - b_\iota)^2$. We bound these quantities with high probability over random choices of biases $b_i$; for the remainder of the argument, we assume that the low-probability ``bad case'' (i.e., having small gaps $b_{\ell} - b_i$) does not occur and apply those bounds without worry about the effects of the bias terms.
By defining $b_i$ as the $i$th smallest element from sample of size $k$ from $\unif$, the distribution of each is a Beta distribution: $b_i \sim \betad(i, k + 1 - i)$  (see \citet{bertsekas2008introduction}).
We recall the low-order moments of these biases in order to bound the moments of the $y_i$'s. Assume $\iota < i$.
\begin{align*}
    \EE{b_i} = \frac{i}{k + 1},\ \ \ 
    \EE{b_i^2}= \frac{i(i + 1)}{(k+1)(k+2)} \text{ and }
    \var{b_i} = \frac{i(k + 1 - i)}{(k+1)^2(k+2)}
    % \EE{b_i^4} &= \frac{i(i+1)(i+2)(i+3)}{(k+1)(k+2)(k+3)(k+4)}.\\
    % \EE{b_\iota b_i} 
    % &= \frac{\iota(i + 1)}{(k +1)(k+2)}.
\end{align*}
We establish a simple lower bound on the summation used to express $\EE[a]{y_i^2}$:
% \[\frac{i\sigma^2}{2k} (b_i - b_{i/2})^2 \leq \EE[a]{y_i^2} \leq \frac{i\sigma^2 b_i^2 }{k}.\]
$\EE[a]{y_i^2} \leq \frac{i\sigma^2}{2k} (b_i - b_{i/2})^2$. 
We apply Chebyshev's inequality to lower-bound $b_i$ and upper-bound $b_{i/2}$.
To do so, note that for sufficiently large $k$, $\var{b_i} \leq \frac{(k+1)^2 / 4}{(k+1)^2(k+2)} \leq \frac1{2k}$. Assume $i \geq k^{3/4}$. Then, by Chebyshev's bound, 
\begin{align*}
    \pr{b_i \leq \frac{7i}{8k}}
    &\leq \pr{\abs{b_i - \EE{b_i}} \geq \frac{i}{8k}} 
    \leq \frac{32}{ \sqrt{k}}.
\end{align*}
By the same calculation, $\pr{b_{i/2} \geq \frac{5i}{8k}} \leq \frac{32}{\sqrt{k}}$.
% and $\pr{b_i \leq \frac{7i}{8k}} \leq \frac{32}{\sqrt{k}}$. 
Then, $\pr{\EE[a]{y_i^2} \geq \frac{i^3 \sigma^2}{8 k^3}} \geq 1 - \frac{64}{\sqrt{k}}.$
Later, this will be important for showing that $y_j \not\in [0, 1]$ with high probability with $\sigma$ large enough.

We also upper-bound the covariances.
Note that $\EE[a]{y_i y_\ell} \leq \frac{i \sigma^2}{k} b_i b_\ell$. For $i \geq k^{3/4}$, the above calculations ensure that $\EE[a]{y_i y_\ell} \leq \frac{3i^2 \ell \sigma^2}{2k^3}$ with probability at least $1 - \frac{64}{\sqrt{k}}$.

% \textbf{As in the previous proof, we note that $f_k$ is 3-periodic if there exists $i < \ell$ such that $y_i > 1$ and $y_\ell < 0$.}
Now, we choose the elements $y_{i_j}$ that guarantee the needed properties. 
Let $i_j := \frac{k}{r^{t-j}}$ (assuming that $k$ is divisible by $2 r^t$) for $t := \frac6\delta$ and $r = (12t)^8$. We assume that $k \geq (\frac{72}\delta)^{192 / \delta} = r^{4t}$, which ensures that every $i_j \geq k^{3/4}$, since \[i_j > \frac{k}{r^t} \geq \frac{k^{3/4} r^t}{r^t} = k^{3/4}.\]
These choices ensure that the variance of each element is high enough to make belonging to $[0, 1]$ unlikely for any $y_{i,j}$, while spreading apart successive iterates $y_{i,j}$ and $y_{i, j+1}$ to make them nearly independent.
To do so, we analyze the normalized variables $x_{j} := \frac{y_{i_j}}{\sqrt{\var[a]{y_{i_j}}}}$, for $j \in [t]$.
% To do so, we fix some $t, r \in \N$ (which depend on only $\delta$ and will be determined later) and define $t$ points $x_1, \dots, x_t$ with \[x_j := \frac{y_{k / r^{j-1}}}{\sqrt{\var[a]{y_{k / r^{j-1}}}}}.\]

% $k$ must be sufficiently large to ensure that $\frac{k}{r^{t-1}}$ is large (the size requirement will be specified below and will again depend only on $\delta$). We assume for the sake of simplicity that $k$ is divisible by $(2r)^{n-1}$ in order to avoid dealing with ceilings and floors. 
Note that, given  fixed $b$, $x_j \sim \mathcal{N}(0, 1)$ and with probability $1 - \frac{128t^2}{\sqrt{k}}$, for every $j < j'$, 
\[\EE[a]{x_j x_{j'}} 
= \frac{\EE{y_{k / r^{t-j}}y_{k/ r^{t- j'}}}}{\sqrt{\var[a]{y_{k / r^{t-j}}}\var[a]{y_{k / r^{t - j'}}}}} 
\le \frac{\frac{3\sigma^2}{2r^{2(t - j) + (t - j')}}}{\frac{\sigma^2}{8r^{3/2(t - j) + 3/2(t - j')}}}
= 12 r^{j/2 - j'/2}
\leq \frac{12}{\sqrt{r}} \leq \frac1{t^4}.
\]

Hence, for a sufficiently large $r$, the $x_j$'s are essentially uncorrelated.
% If $r := (12t)^4$, then $\EE[a]{x_j x_{j'}}  \leq \frac1{t^4}$.
Note that $x = (x_1, \dots, x_t) \in \R^t$ (if we assume a fixed $b$ that satisfies the above high probability events) has $x \sim \mathcal{N}(0, \Sigma)$, where $\Sigma_{j,j} = \mathbf{1}$ and $\Sigma_{j, j'} \leq \frac{1}{t^4}$. 
The smallest eigenvalue of $\Sigma$ can be bounded. (Similarly for $\eigmax(\Sigma)$, see Lemma~\ref{lem:eig}.) Given the lack of correlation between the $t$ random variables for sufficiently large $r$, we bound the probability that the signs of the elements of $x$ follow a certain pattern. We basically need to show that $x$ has a $+$ sign that precedes a $-$ sign (like triangle waves have $+2$ slope followed by $-2$ slope). Using this, we can estimate the probability of period 3 emerging and this completes the proof. (Due to space limitations, some last steps are in Appendix~\ref{app:omit}.)
\end{proof}
\begin{corollary}
   For weights drawn as $a_i \sim \mathcal{N}(0, \omega(\frac1k))$, the $\lim_{k \to \infty} \pr{f_k \text{ is 3-periodic}} = 1$.
\end{corollary}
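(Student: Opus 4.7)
The plan is to reduce the corollary to Theorem~\ref{thm:main} via a coupling argument that exploits the monotonicity of the sufficient condition from Lemma~\ref{lem:start} under positive scaling of the weights. Writing $a_i \sim \mathcal{N}(0, v_k/k)$ with $v_k = \omega(1)$, since $v_k \to \infty$ the variance eventually exceeds any constant threshold $\sigma_\delta^2$ produced by Theorem~\ref{thm:main}, and larger variance can only make that sufficient condition easier to satisfy.

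First I would fix an arbitrary $\delta \in (0,1)$ and invoke the proof of Theorem~\ref{thm:main} at variance $\sigma_\delta^2/k$ to obtain a threshold $k_\delta$ such that, for all $k \geq k_\delta$ and i.i.d.\ samples $\tilde a_i \sim \mathcal{N}(0, \sigma_\delta^2/k)$ and $b_i \sim \unif$, with probability at least $1-\delta$ there exist $i < \ell$ with $\tilde y_i := \sum_{\iota < i} \tilde a_\iota (b_i - b_\iota) > 1$ and $\tilde y_\ell < 0$. I would then introduce the coupling $a_i := (\sqrt{v_k}/\sigma_\delta)\, \tilde a_i$, which has the correct marginal $\mathcal{N}(0, v_k/k)$. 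For $k$ large enough that $v_k \geq \sigma_\delta^2$, the factor $c := \sqrt{v_k}/\sigma_\delta \geq 1$, so the corresponding pre-clipping values $y_i = c\, \tilde y_i$ satisfy $y_i > 1$ and $y_\ell < 0$ whenever their tilded counterparts do. By Lemma~\ref{lem:start}, this forces $f_k$ to have period 3 with probability at least $1 - \delta$; letting $k \to \infty$ and then $\delta \to 0^+$ gives the claim.

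The only real subtlety (and the main obstacle) is that 3-periodicity is not obviously monotone in the scale of the weights: the clipping in the definition of $f_k$ prevents a direct coupling at the level of the maps themselves, and the period structure of the rescaled map is a priori unrelated to that of the original. The workaround is to argue entirely at the level of the pre-clipping linear combinations $y_i$, which are genuinely linear in the weights and hence monotone under positive rescaling, and to extract from the proof of Theorem~\ref{thm:main} that its high-probability event is in fact the intermediate one $\{\exists i < \ell : y_i > 1, y_\ell < 0\}$ from Lemma~\ref{lem:start}, rather than 3-periodicity itself.
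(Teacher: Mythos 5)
Your proposal is correct, and it lands on the same underlying source of the result (Theorem~\ref{thm:main} and the Lemma~\ref{lem:start} witness event), but the mechanism you use is genuinely different from what the paper implicitly intends. The paper states the corollary without proof; the intended argument is simply that the proof of Theorem~\ref{thm:main} imposes only a \emph{one-sided} condition on the variance parameter (it assumes $\sigma \geq t r^{3t}/\delta$, and $\sigma$ enters the final probability bound only through the decreasing term $t r^{3t}/(3\sigma)$), so for $\sigma_k^2 = \omega(1)$ one may re-run the same proof verbatim once $k$ is large enough that $\sigma_k$ clears the threshold. Your route instead treats the theorem (or rather, its intermediate high-probability event) as a black box and transfers it to the growing-variance regime via the scaling coupling $a_i = (\sqrt{v_k}/\sigma_\delta)\,\tilde a_i$, using that the pre-clipping values $y_i$ are linear in the weights and that the event $\{\exists\, i<\ell:\ y_i>1,\ y_\ell<0\}$ is monotone under positive rescaling by $c\geq 1$. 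You are also right to flag, and correctly resolve, the one real subtlety: 3-periodicity of the clipped map $f_k$ is not obviously monotone in the weight scale, so the coupling must be performed at the level of the $y_i$'s, and one must check that the proof of Theorem~\ref{thm:main} actually establishes the Lemma~\ref{lem:start} event rather than 3-periodicity directly --- which it does. The trade-off is that the paper's implicit argument is shorter but requires re-inspecting how $\sigma$ enters every step of the proof, whereas your coupling isolates exactly one structural fact (linearity of $y_i$ in $a$) and is more robust to changes in the internals of the theorem's proof.
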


%Main ingredients of the proof in three steps: analyzing pieces, checking for fixed points of period 3, this shows chaos, how the prob. doesn't depend on width.: intuition based on sufficient condition for 3 pieces being ok...
%period 3 probability goes to 1 as k goes to infinity
%help illuminate why chaotic behavior arises, as most of the high-level ideas are present, without the need for complicated calculations due to non-linearities. The basic property we use is continuity in this case, as period 3 implies chaos.

%We first present a proof for random piecewise pieces. We believe this to be interesting as it tells us sth interesting: most maps are chaotic. Then we add non linearities in the analysis and give the main proof in the next section.

%\section{Persistence of Chaos} 
%define a function to be triangle-like if it starts 0 goes to 1 then back to 0. we will bound the probability the RNN to contain such a triangle. This is in some sense the witness for chaos and as we show there is contant probability to exhibit it. Then in the experimental section we will estimate this constant to be roughly 13 per cent. Our analysis will be pessimistic in bounding the probability but we will later determine the constant experimentally (it is roughly ranging from 10 per cent 20 per cent being always at least 10 per cent as we make different architectural choices.)

Combining the fact that $f_k$ will have period 3 with some constant probability $\del$, with the work of~\citet{iclr} on the expressivity of networks as measured by $\#$linear regions we get:
\begin{corollary}
With the same probability $\del$, the number of linear regions of $f_k\in\RNN$ is exponential in the number $t$ of feedback iterations, with a growth rate of $\phi=\tfrac{1+\sqrt{5}}{2}$. 
\end{corollary}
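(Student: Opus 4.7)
The plan is to combine Theorem~\ref{th:chaos} (existence of a period-$3$ point with constant probability $\delta$) with the known quantitative bridge from periodic points to linear regions established in Theorem~1.5 of~\cite{iclr}. The starting observation is that $f_k \in \RNN(k,\sigma^2)$ is, deterministically in the randomness of $(a_i,b_i)$, a continuous piecewise-linear map $[0,1]\to[0,1]$ with at most $k+2$ breakpoints (the $k$ ReLU kinks plus the two kinks from $\clip(\cdot)$). Hence for every realization of weights, the iterated map $f_k^t$ is also continuous and piecewise-linear, and its ``expressivity'' is measured by the number of maximal subintervals of $[0,1]$ on which $f_k^t$ is affine.

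First I would condition on the event $E$ that $f_k$ has a point of period $3$; by Theorem~\ref{th:chaos} we have $\Pr[E] \geq \delta$. On $E$, let $\{p_1,p_2,p_3\}\subset[0,1]$ be a period-$3$ orbit of $f_k$, ordered so that $p_1<p_2<p_3$. Then I would invoke the standard Markov-graph argument that underlies Sharkovsky's theorem: the two intervals $I_1=[p_1,p_2]$ and $I_2=[p_2,p_3]$ form a Markov partition whose incidence graph contains the transitions $I_1\to I_2$, $I_2\to I_1$, and $I_2\to I_2$ (since exactly one of the two cyclic orderings of the orbit is realized, and both yield the same reduced transition matrix up to relabeling). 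The associated transition matrix is $\bigl(\begin{smallmatrix}0&1\\1&1\end{smallmatrix}\bigr)$, whose Perron eigenvalue is $\phi=\tfrac{1+\sqrt5}{2}$.

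Next, I would cite Theorem~1.5 of~\cite{iclr}, which makes this quantitative: for any continuous piecewise-linear $f:[0,1]\to[0,1]$ possessing a period-$3$ orbit, the number of linear regions of $f^t$ (equivalently, the number of monotone pieces) is at least $c\cdot\phi^t$ for some positive constant $c$ and all sufficiently large $t$. Applying this statement to $f_k$ on the event $E$ gives that $f_k^t$ has at least $c\cdot\phi^t$ linear regions, which is the claim. Putting this together with $\Pr[E]\geq \delta$ yields the corollary with probability $\delta$.

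The only step that requires any care is checking that the Markov-partition/linear-region lemma from~\cite{iclr} applies verbatim to our $f_k$: the hypotheses there are continuity, the piecewise-linearity of $f_k$ (which ensures that the lower bound on oscillations translates into a lower bound on linear regions without loss), and the existence of a period-$3$ point. All three are satisfied here by construction, so no new technical work is needed beyond Theorem~\ref{th:chaos}; the corollary is a black-box consequence.
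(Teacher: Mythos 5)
Your proposal is correct and takes essentially the same route as the paper: the paper obtains this corollary exactly by combining the constant-probability period-3 guarantee with Theorem 1.5 of \cite{iclr}, which converts a period-3 orbit into an $\Omega(\phi^t)$ lower bound on the number of linear regions of $f_k^t$, treating that result as a black box. The Markov-partition details you supply (the transition matrix with Perron eigenvalue $\phi=\tfrac{1+\sqrt{5}}{2}$) are precisely the mechanism inside that cited theorem, so nothing beyond your argument is needed.
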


\begin{restatable}{theorem}{thhe}\label{th:he}
    Consider $f_k$ initialized according to the He initialization (i.e. $f_k$ from Thm.~\ref{thm:main} with $\sigma = 2$). For sufficiently large $k$, $f_k$ is 3-periodic w.p. at least some constant. (Proof in App.~\ref{asec:thhe}.)
\end{restatable}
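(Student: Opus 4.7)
The plan is to apply Lemma~\ref{lem:start} with the fixed pair of indices $i = \lfloor k/2 \rfloor$ and $\ell = k$, and to show that the joint event $\{y_i > 1, y_\ell < 0\}$ has probability at least some positive absolute constant when $\sigma = 2$. Because the target probability is ``some constant'' rather than $1 - \delta$, we do not need to take $\sigma$ large as in Theorem~\ref{thm:main}, nor construct many near-independent indices; a single pair of indices suffices provided we can certify that $(y_i, y_\ell)$ is jointly non-degenerate.

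I would start by conditioning on a ``good'' bias event---the Chebyshev-type order-statistic event developed in the proof of Theorem~\ref{thm:main}---which holds with probability $1 - O(1/\sqrt{k})$ and guarantees $\mathrm{Var}[y_i] \geq \sigma^2/64 = 1/16$ and $\mathrm{Var}[y_\ell] \geq \sigma^2/8 = 1/2$. To bound the correlation of $(y_i, y_\ell)$ away from $1$, I would use the decomposition $y_\ell = P + Q$ where $P = \sum_{\iota < i} a_\iota (b_\ell - b_\iota)$ depends on the same weights as $y_i$ and $Q = \sum_{\iota \geq i} a_\iota (b_\ell - b_\iota)$ is independent of $y_i$. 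Then $\mathrm{Cov}[y_i, y_\ell] = \mathrm{Cov}[y_i, P]$ and $\mathrm{Var}[y_\ell] = \mathrm{Var}[P] + \mathrm{Var}[Q]$, so Cauchy--Schwarz gives $1 - \rho^2 \geq \mathrm{Var}[Q]/\mathrm{Var}[y_\ell]$. A direct computation on typical biases shows $\mathrm{Var}[Q] \approx \sigma^2/24 = \Theta(1)$, so $1 - \rho^2$ is bounded below by some absolute constant $c_0 > 0$.

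Given these bounds, conditional on the good biases the pair $(y_i, y_\ell)$ is a non-degenerate bivariate Gaussian (over the randomness of the $a_\iota$'s) whose covariance matrix lies in a compact subset of strictly positive-definite $2 \times 2$ matrices. Hence the density is uniformly bounded below on any compact region, and in particular $\Pr[y_i \in (1, 2), y_\ell < 0]$ is bounded below by some absolute constant $c > 0$. Combining with Lemma~\ref{lem:start} and a union bound against the bad bias event, I would conclude $\Pr[f_k \text{ is 3-periodic}] \geq c - O(1/\sqrt{k}) \geq c/2$ for $k$ sufficiently large. The main technical obstacle is ensuring the correlation bound $1 - \rho^2 \geq c_0$ and the variance lower bounds hold uniformly over the good bias event: while each pointwise bound is immediate from the independent-noise decomposition and Cauchy--Schwarz, the uniformity requires combining the order-statistic concentration results from the proof of Theorem~\ref{thm:main} with careful bookkeeping of the ratios $\mathrm{Var}[Q]/\mathrm{Var}[y_\ell]$ and $\mathrm{Var}[y_i]/\mathrm{Var}[y_\ell]$ across all typical bias configurations.
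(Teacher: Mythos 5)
Your proposal is correct and follows the same overall architecture as the paper's proof: fix two indices, condition on a good-bias event established via the order-statistic Chebyshev bounds from Theorem~\ref{thm:main}, argue that $(y_i,y_\ell)$ is a jointly non-degenerate bivariate Gaussian over the weight randomness, conclude that the event $\{y_i>1,\ y_\ell<0\}$ has constant probability, and finish with Lemma~\ref{lem:start} plus a union bound against the bad biases. The one step you handle genuinely differently is the certification that the correlation of $(y_i,y_\ell)$ is bounded away from $1$. The paper takes $i_1=k/288$ and $i_2=k$ and reuses the covariance-ratio bound from Theorem~\ref{thm:main}, which yields $\mathbb{E}[x_1x_2]\le 1/\sqrt2$ precisely because the two indices are multiplicatively well separated; it then lower-bounds $\Pr[g_1\ge 288^2,\ g_2\le 0]$ for the explicit $2\times2$ covariance matrix. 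You instead take $i=\lfloor k/2\rfloor$, $\ell=k$ and exploit the fresh weights $a_\iota$, $\iota\ge i$, that enter $y_\ell$ but not $y_i$: writing $y_\ell=P+Q$ with $Q$ independent of $y_i$ gives $1-\rho^2\ge \mathrm{Var}[Q]/\mathrm{Var}[y_\ell]$ by Cauchy--Schwarz, and $\mathrm{Var}[Q]=\Theta(\sigma^2)$ on typical biases. Your route is arguably cleaner and more robust --- it does not require the indices to be multiplicatively separated, and it isolates exactly why the pair cannot degenerate --- at the cost of a little extra order-statistic bookkeeping to bound $\mathrm{Var}[Q]$ uniformly over the good-bias event (one more Chebyshev application, e.g.\ on $b_{3k/4}$, suffices). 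Both arguments deliver the same conclusion with the same $1-O(1/\sqrt{k})$ loss, and your explicit use of the bounded-density-on-a-compact-box argument to handle the requirement $y_i>1$ (rather than merely $y_i>0$) matches the role played by the explicit threshold $x_1\ge 2\cdot 288^2$ in the paper.
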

%\begin{proof}
%    The proof echoes that of Theorem~\ref{thm:main}.
%    We consider the formulation of the previous problem with $t = 2$ and $r = 288$. That is, $y_{i_1} = y_{k/288}$ and $y_{i_2} = y_k$.
    
%    By the previous proof, with probability at least $1 - \frac{128}{\sqrt{k}}$, $\EE[a]{y_{k/288}^2} \geq \frac{1}{2 \cdot 288^2}$ and $\EE[a]{x_{1}x_2} \leq \frac1{\sqrt{2}}$ for $x_j := \frac{y_{i_j}}{\sqrt{\var[a]{y_j}}}$. We say that this event occurs if ``$b$ is good.'' We can then place a lower bound on the probability that $f_k$ is 3-periodic. Let $\Sigma \in \R^{2 \times 2}$ be a covariance matrix with $\Sigma_{1,1} = \Sigma_{2,2} = 1$ and $\Sigma_{1,2} = \Sigma_{2,1} = \frac1{\sqrt2}$.
 %   \begin{align*}
  %      \pr{\text{$f_k$ 3-periodic}}
   %     &\geq \pr{\text{$b$ is good, } y_{i_1} \geq 1, y_{i_2} \leq 0} \\
    %    &\geq \pr{y_{i_1} \geq 1, y_{i_2} \leq 0} - \frac{128}{\sqrt{k}} \\
     %   &\geq \pr{x_1 \geq 2 \cdot 288^2, x_2 \leq 0} - \frac{128}{\sqrt{k}} \\
      %  &\geq \pr[g \sim \mathcal{N}(0, \Sigma)]{g_1 \geq 288^2, g_2 \leq 0} - \frac{128}{\sqrt{k}} \\
       % &:= c - \frac{128}{\sqrt{k}} 
        %\geq \frac{c}{2}.
    %\end{align*}
    %The second-last inequality relies on the fact that $ \pr[g \sim \mathcal{N}(0, \Sigma)]{g_1 \geq 288^2, g_2 \leq 0}$ is an extremely small constant $c$, and the final one assumes that $k \geq \frac{2^{15}}{c^2}$.
%\end{proof}

\begin{restatable}{theorem}{thbelow}\label{th:below}
    Consider $f_k$ initialized with $a_i \sim \mathcal{N}(0, \frac{1}{4 k \log k})$. Then, the probability that $f_k$ has no fixed points (therefore exhibits no chaotic behavior) is $\ge 1-\frac1k$. (Proof in App.~\ref{asec:th:below}.)
\end{restatable}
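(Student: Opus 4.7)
The final statement --- ``$f_k$ has no fixed points (therefore exhibits no chaotic behavior)'' --- cannot hold literally: any continuous self-map of $[0,1]$ has at least one fixed point, and here $g(0) = \sum_i a_i \relu(-b_i) = 0$ almost surely forces $f_k(0) = 0$. Following the Section~\ref{sec:prelims} usage of ``higher-order fixed points'' for periodic points, I read the conclusion as: $f_k$ has no periodic orbit of period $\geq 2$, which is exactly the class of points whose absence rules out chaotic behavior. The plan is to prove the stronger fact that, with probability at least $1-1/k$, $f_k$ is a strict contraction on $[0,1]$; by Banach's fixed-point theorem this forces the unique orbit $\{0\}$ and forbids any higher-order periodic structure or Li-Yorke scrambling.

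Write $f_k = \clip \circ g$ with $g(x) := \sum_{i=1}^k a_i \relu(x - b_i)$. Since $\clip$ is $1$-Lipschitz, $\mathrm{Lip}(f_k) \leq \mathrm{Lip}(g)$, so it suffices to bound $|g'|$ uniformly. Sort the biases $b_{(1)} < \cdots < b_{(k)}$ and let $a_{(j)}$ denote the weight paired with $b_{(j)}$. On each linear piece, $g'(x) = S_i := \sum_{j\leq i} a_{(j)}$. Since $\{a_i\}$ is independent of $\{b_i\}$ and the $a_i$'s are exchangeable, the reindexed sequence $a_{(1)}, \ldots, a_{(k)}$ is still i.i.d.\ $\mathcal{N}(0, \tfrac{1}{4k\log k})$, so $(S_i)_{i \leq k}$ is a centered Gaussian martingale with terminal variance $\tfrac{1}{4\log k}$.

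Applying Doob's maximal inequality (equivalently, the reflection principle for Gaussian random walks) combined with a standard Gaussian tail bound yields
\[
    \Pr\!\left[\max_{i \leq k} |S_i| \geq 1\right] \;\leq\; 2\Pr[|S_k| \geq 1] \;\leq\; 4\exp(-2\log k) \;=\; \tfrac{4}{k^2} \;\leq\; \tfrac{1}{k}
\]
for all $k \geq 4$. On the complementary event $\mathrm{Lip}(g) = \max_i |S_i| < 1$, so $f_k$ is a strict contraction with some ratio $c < 1$. Banach's fixed-point theorem then yields a unique fixed point, which must be $x=0$, and the bound $d(f_k^n(x), f_k^n(y)) \leq c^n |x-y|$ forbids any $x \neq 0$ from being periodic and forces every pair $(x,y)$ to be asymptotic, precluding any scrambled set per Definitions~\ref{def:scramble}--\ref{def:chaos} and thus all Li-Yorke chaos.

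The main obstacle is the careful handling of the random reordering: the permutation that sorts the biases determines the order in which the weights enter the partial sums $S_i$, but independence between $\{a_i\}$ and $\{b_i\}$ together with exchangeability of the Gaussian weights preserves their i.i.d.\ distribution after reindexing, so the $S_i$'s behave as a fresh Gaussian random walk. Once this is set up, the remainder is a routine martingale maximal inequality calibrated so that the failure probability is bounded by $1/k$, matching the theorem.
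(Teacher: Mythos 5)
Your proof is correct, and it takes a genuinely different route from the paper's. The paper works with the function's \emph{values} at the breakpoints: conditionally on the (sorted) biases, $y_i = \sum_{\iota<i} a_\iota(b_i-b_\iota)$ is centered Gaussian with variance at most $b_i^2/(4\log k)$, so a Gaussian tail bound gives $\pr{y_i \ge b_i} \le e^{-2\log k} = k^{-2}$, and a union bound over the $k$ breakpoints yields $f_k(x) < x$ on $(0,1]$ with probability $1-\tfrac1k$, whence every orbit decreases to $0$. You instead work with the \emph{slopes}: the derivative of $g$ on each linear piece is a partial sum $S_i$ of the reshuffled (still i.i.d., by independence from the biases and exchangeability) weights, and the symmetric-walk maximal inequality gives $\pr{\max_i |S_i| \ge 1} \le 2\,\pr{|S_k|\ge 1} \le 4/k^2$, so $f_k$ is a strict contraction with high probability. (Minor nomenclature point: the factor-of-two bound is L\'evy's inequality for symmetric increments rather than Doob's inequality, but the claim is correct as stated via the reflection principle.) Both arguments exploit exactly the same $\log k$ headroom in the variance $\tfrac{1}{4k\log k}$ to make the relevant failure a $2\sqrt{\log k}$-standard-deviation event; yours buys a somewhat stronger and cleaner conclusion, since contraction gives geometric convergence, directly certifies that every pair of points is asymptotic (hence no scrambled set exists), and rules out the endpoint $x=1$ as a fixed point, a corner case the paper's below-the-diagonal argument leaves implicit. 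Your reading of ``no fixed points'' as ``no nontrivial periodic structure'' is also the right one: $0$ is always a fixed point of $f_k$, and the paper's own proof in fact establishes the same thing (no fixed points in $(0,1]$ and convergence of all orbits to $0$).
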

\vspace{-0.1cm}
\section{Experiments}\label{sec:experiments}
Our goal is to evaluate how robust our findings are to specifications of the RNN model, and to empirically estimate some bounds on how often scrambling phenomena appear\footnote{Our code is made publicly available here: \url{https://github.com/steliostavroulakis/Chaos_RNNs/blob/main/Depth_2_RNNs_and_Chaos_Period_3_Probability_RNNs.ipynb}}.

%robust model
%probability estimation
%table with computing the probability for period 3 and estimating frequency for feedforward nets to showcase the contrast!

%extensions:
%at least experiments rank one first weight matrix?
%then high-dimensions?

%(maybe for future) training and showing period 3 at edge of chaos? 
%\vspace{-0.4cm}

\paragraph{Robustness across Different Models}
We try different architectures and initialization techniques from the literature~\citep{he2015delving,glorot2010understanding}, truncated Gaussians, and others, for setting the weights and the biases. Our table in Fig.~\ref{fig:table} summarizes the results across 10000 runs of each experiment. For each experiment, the first layer has width 2, where the weights and biases of each neuron are initialized according to each line, whereas the second layer has width 1 with weight and bias as specified in each line. Notice that the last column contains the probability that the recurrent model is chaotic, having period 3. The way we classify whether the output of an RNN is chaotic or not is done by plotting one iteration of the RNN and three iterations of the RNN and then checking whether there are more fixed points in the latter, implying period 3 (for a pictorial representation see~Figure~\ref{fig:checking} in Appendix~\ref{app:figs}).
\begin{figure*}[ht!]
    \centering
    \includegraphics[width=\textwidth]{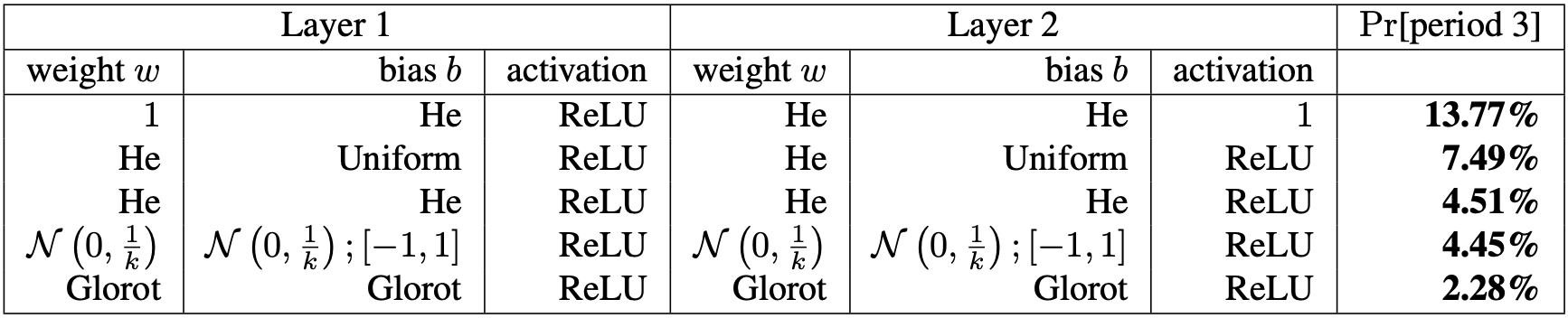}
    \vspace{-0.3cm}
    \caption{The rightmost column has the estimates for the probability that the RNN exhibits period 3. We ran the experiment for 10000 times and checked whether the random RNN has period 3 (see~Fig.~\ref{fig:checking}). Each line specifies the type of initialization or activation unit used.}%\clayton{To make this clearer, maybe we want to define $t$ timesteps of an RNN as $f^t(x)$ and say that we're plotting $\| f^t(x) - f^t(y)\|$ instead? Otherwise, it's a little confusing what is changing.}}
    \label{fig:table}
\end{figure*}
The purpose of this experiment was to quantify the probability for having period 3 in one-dimensional randomly initialized RNNs, as in higher dimensions there is not a clean mathematical statement for checking periods 3. This validates our theoretical findings that chaos (in the sense of Li-Yorke) persists, in contrast to FNNs (see Figure~\ref{fig:noisyFNN}).
\paragraph{RNNs transitions in Higher Dimensions}
In higher dimensions, we used an MNIST dataset as input to a 64-dimensional RNN with 1 hidden layer, width 64, fully connected, with ReLU activation functions and He initialization. Note that we did not train the RNN as this was not our purpose. We simply initialized it and observed (Fig.~\ref{fig:sigma}) how the values of the output neurons fluctuate with respect to $t$ (the number of compositions of the RNN with itself) for various $\sigma$. Since high dimensions do not have clean characterizations of chaos such as period 3, we instead detect chaos by relying on a necessary condition: Jacobian's spectral norm being more than 1 (See Fig.~\ref{fig:phases}).

\begin{figure*}[ht!]
    \centering
    \includegraphics[width=0.95\textwidth]{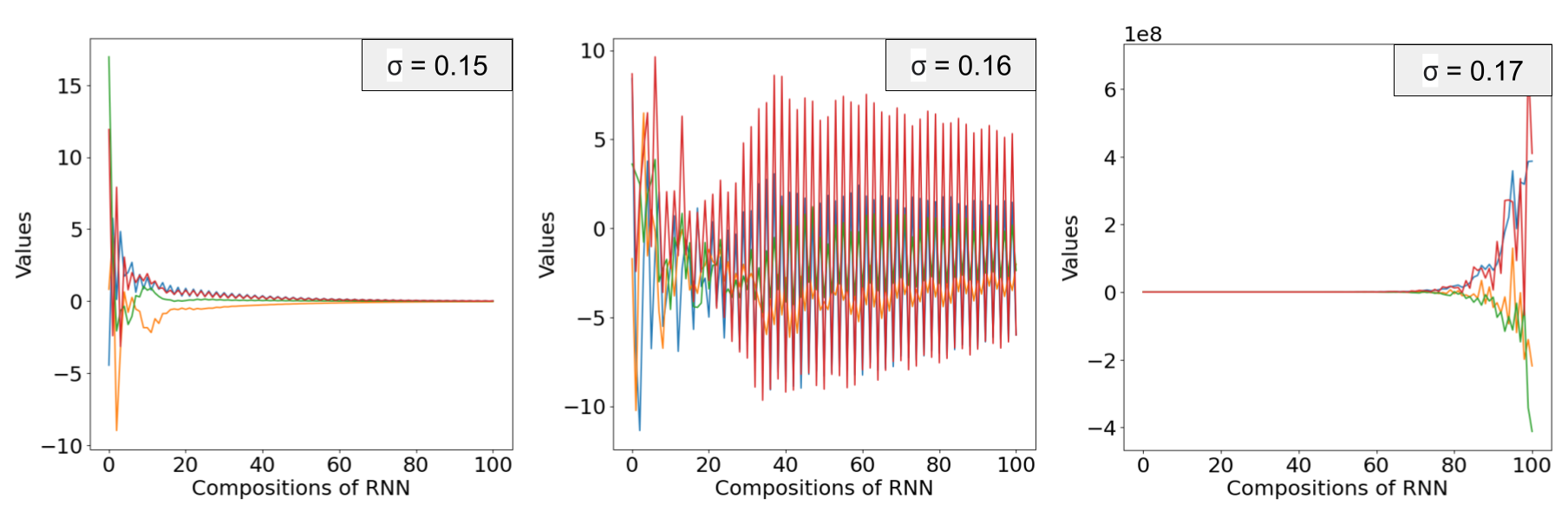}
    \vspace{-0.3cm}
    \caption{The y-axis has values for 4 different output neurons, while x-axis is number of RNN compositions. From left-to-right, as we vary $\sigma$, RNN becomes unstable similarly to 1D case in Fig.~\ref{fig:scrambling}. 
    }%\clayton{To make this clearer, maybe we want to define $t$ timesteps of an RNN as $f^t(x)$ and say that we're plotting $\| f^t(x) - f^t(y)\|$ instead? Otherwise, it's a little confusing what is changing.}}
    \label{fig:sigma}
\end{figure*}

\begin{figure*}[ht!]
    \centering
    \includegraphics[width=0.6\textwidth]{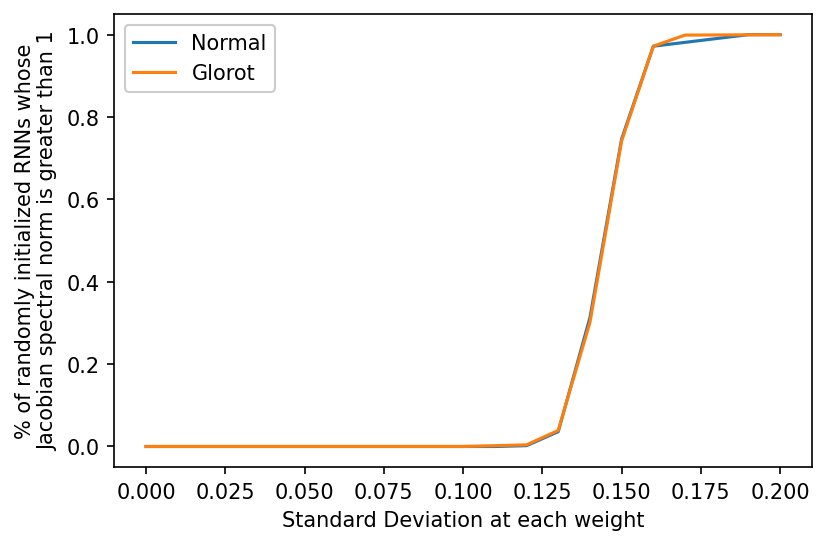}
    \vspace{-0.3cm}
    \caption{The y-axis is the $\%$ of RNNs (out of 1000 runs) having large Jacobian norm, and x-axis is varying values of $\sigma$. Echoing our main theorems~\ref{th:chaos},~\ref{th:mainTh2}, we experimentally determine for two popular initialization schemes the threshold where the input-output Jacobian has spectral norm $>1$.}%\clayton{To make this clearer, maybe we want to define $t$ timesteps of an RNN as $f^t(x)$ and say that we're plotting $\| f^t(x) - f^t(y)\|$ instead? Otherwise, it's a little confusing what is changing.}}
    \label{fig:phases}
\end{figure*}

\paragraph{Additional Experiments and Remarks.}
Regarding the experiments reported in Figure 3, the reported numbers were without the Clipping operation, so that the architecture is more faithful to practice. We believe it is an interesting direction to further understand the chaotic properties of neural networks at initialization and exactly quantify both experimentally and theoretically the probability of chaos. Of course, for the simpler one-dimensional case, we were able to use the period 3 property which is a simple to state and simple to verify condition. For higher dimensional RNNs, that are closer to those being used in practice nowadays, it still remains a challenging problem to characterize when chaos emerges. Other notions of chaos like Devanay's chaos~\citep{huang2002devaney} could also be useful when exploring chaotic properties of neural nets (Devaney’s chaos is a stronger notion of chaos compared to Li–Yorke’s chaos).

For completeness, we also performed some additional experiments with the Clipping operation and the numbers are still ranging from $2\%$-$9\%$ for the probability of chaos. Specifically, following the setup of Fig. 3, out of 10000 runs, we got (with Clipping) that the chaos frequency reported in the last column is $3.6, 5.8, 9.7, 5.7, 2.2$ respectively for the different initializations. We also performed experiments with other types of activations units and higher values for the width and the depth and we still observe that these variants too exhibit period 3. For example, with tanh activations, the last column would be $9.7, 11, 17.6, 2.6, 4.6$ respectively.  If we instead change the width to be equal to 4, the numbers become $13.1, 8.1,14.09, 0.8,2$ and in another setting where the depth equals 3 the numbers become $6.5, 3.1, 5, 0.34, 1.14$.

\section*{Conclusion}

Using tools from discrete dynamical systems, we show simple models for randomly initialized recurrent neural networks exhibiting scrambling phenomena and we study the associated threshold phenomena for the emergence of Li-Yorke chaos. Our theory and experiments explain observed RNN behaviors, highlighting the difference between FNNs and RNNs in a simple setting. More broadly, we believe this connection can be fruitful to obtain a better understanding of RNNs at initialization as it adds to a recent thread of works that give interesting new perspectives on neural networks, borrowing ideas from dynamical systems.

\section*{Acknowledgments}
The authors would like to thank the anonymous reviewers for their time and their useful feedback that improved the presentation of our paper. VC would like to acknowledge support from a start-up grant at UC Santa Cruz. IP would like to acknowledge support from a start-up grant at UC Irvine. Part of this work was done while the authors VC, IP, SS were visiting UC Berkeley and the Simons Institute for the Theory of Computing, while VC was a FODSI fellow at MIT and Northeastern. CS was supported by NSF GRFP and NSF grants CCF-1814873 and IIS-1838154. 

\newpage

\appendix

\bibliographystyle{apalike}
\bibliography{general}

\newpage
\section*{Checklist}

%%% BEGIN INSTRUCTIONS %%%
The checklist follows the references.  Please
read the checklist guidelines carefully for information on how to answer these
questions.  For each question, change the default \answerTODO{} to \answerYes{},
\answerNo{}, or \answerNA{}.  You are strongly encouraged to include a {\bf
justification to your answer}, either by referencing the appropriate section of
your paper or providing a brief inline description.  For example:
\begin{itemize}
  %\item Did you include the license to the code and datasets? \answerYes{See Section Instructions.}
  %\item Did you include the license to the code and datasets? \answerNo{The code and the data are proprietary.}
  \item Did you include the license to the code and datasets? \answerYes{We just used MNIST dataset. Our code is made publicly available here: \url{https://github.com/steliostavroulakis/Chaos_RNNs/blob/main/Depth_2_RNNs_and_Chaos_Period_3_Probability_RNNs.ipynb}}
\end{itemize}
Please do not modify the questions and only use the provided macros for your
answers.  Note that the Checklist section does not count towards the page
limit.  In your paper, please delete this instructions block and only keep the
Checklist section heading above along with the questions/answers below.
%%% END INSTRUCTIONS %%%

\begin{enumerate}

\item For all authors...
\begin{enumerate}
  \item Do the main claims made in the abstract and introduction accurately reflect the paper's contributions and scope?
    \answerYes{}
  \item Did you describe the limitations of your work?
    \answerYes{}
  \item Did you discuss any potential negative societal impacts of your work?
    \answerNA{We develop theory to understand observed behavior of RNNs.}
  \item Have you read the ethics review guidelines and ensured that your paper conforms to them?
    \answerYes{}
\end{enumerate}

\item If you are including theoretical results...
\begin{enumerate}
  \item Did you state the full set of assumptions of all theoretical results?
    \answerYes{}
        \item Did you include complete proofs of all theoretical results?
    \answerYes{Some of the proofs are in the Appendix.}
\end{enumerate}

\item If you ran experiments...
\begin{enumerate}
  \item Did you include the code, data, and instructions needed to reproduce the main experimental results (either in the supplemental material or as a URL)?
    \answerYes{As a URL:Our code is made publicly available here: \url{https://github.com/steliostavroulakis/Chaos_RNNs/blob/main/Depth_2_RNNs_and_Chaos_Period_3_Probability_RNNs.ipynb}}
  \item Did you specify all the training details (e.g., data splits, hyperparameters, how they were chosen)?
    \answerNA{}
        \item Did you report error bars (e.g., with respect to the random seed after running experiments multiple times)?
    \answerNA{}
        \item Did you include the total amount of compute and the type of resources used (e.g., type of GPUs, internal cluster, or cloud provider)?
    \answerNA{}
\end{enumerate}

\item If you are using existing assets (e.g., code, data, models) or curating/releasing new assets...
\begin{enumerate}
  \item If your work uses existing assets, did you cite the creators?
    \answerNA{}
  \item Did you mention the license of the assets?
    \answerNA{}
  \item Did you include any new assets either in the supplemental material or as a URL?
    \answerNA{}
  \item Did you discuss whether and how consent was obtained from people whose data you're using/curating?
    \answerNA{}
  \item Did you discuss whether the data you are using/curating contains personally identifiable information or offensive content?
    \answerNA{}
\end{enumerate}

\item If you used crowdsourcing or conducted research with human subjects...
\begin{enumerate}
  \item Did you include the full text of instructions given to participants and screenshots, if applicable?
    \answerNA{}
  \item Did you describe any potential participant risks, with links to Institutional Review Board (IRB) approvals, if applicable?
    \answerNA{}
  \item Did you include the estimated hourly wage paid to participants and the total amount spent on participant compensation?
    \answerNA{}
\end{enumerate}

\end{enumerate}
\newpage

\section{An additional example}
% DONT FORGET THE CHECKLIST FROM NEURIPS FORMAT
%CHECKLIST DONT FORGET
% \subsection{One more simple example}
\begin{figure}[ht]
    \centering
    \includegraphics[width=0.95\textwidth]{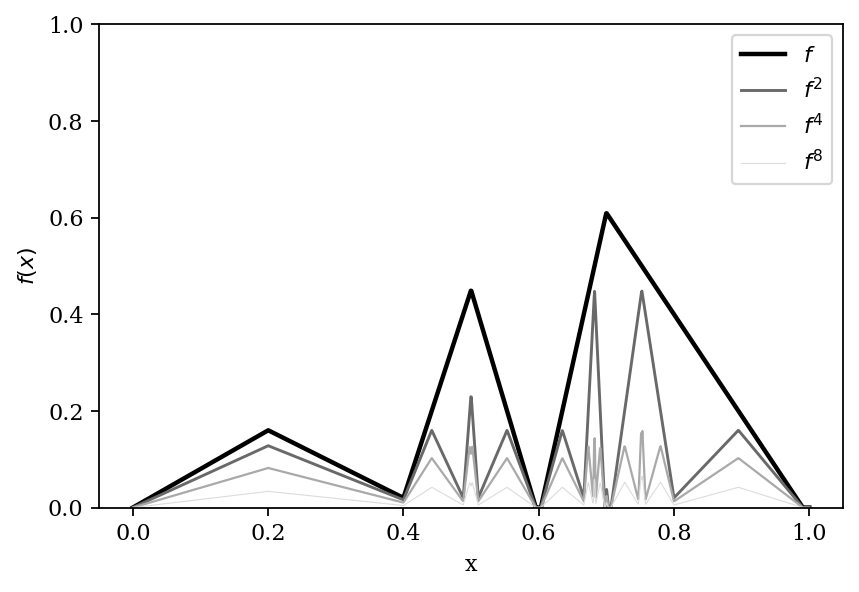}
    \vspace{-0.2cm}
    \caption{A function $f$ and its repeated compositions $f^2(x), f^4(x), f^8(x)$. Despite the large Lipschitz constant $L>6$ (notice here this coincides with the spectral norm of the Jacobian as well, since $f$ is piecewise linear function), there will be no scrambling phenomena. Indeed, $f$ is a contraction or a contractive map, and repeated applications of $f$ will lead to the zero function, so no scrambled sets exist. Moreover, since the function does not have any periodic points (other than fixed points), the number of linear regions cannot grow exponentially (see also~\cite{iclr} where they prove that \textit{odd} periods (larger than 1) are those that cause exponentially many linear regions).}
    \label{fig:Jacobian}
\end{figure}
The definitions of scrambling and scrambled sets~(\ref{def:scramble}) capture exactly the fact that trajectories of nearby points can get arbitrarily close, yet also move apart infinitely often~(as in Fig.~\ref{fig:scrambling}). Intuitively, when scrambling occurs in RNNs, their number of linear regions created by the output neurons will grow large with the number of iterations and their input-output Jacobian will have spectral norm larger than one. Concretely, see the work of~\cite{iclr}, where they quantify exactly the rate of growth for the linear regions based on periodic points of the function; for example, if the function has period 3 then the number of linear regions grows exponentially in the number of iterations of the RNN (the base of exponent is the golden ratio $\approx1.618$) and the Jacobian of the network has spectral norm at least as large as $1.618$.

However, the opposite is not true, as there could be examples (see Figure~\ref{fig:Jacobian}), where the spectral norm of the Jacobian is large, even though there is neither scrambling phenomena nor high expressivity. 

%however the opposite is not true, as the Jacobian can be large, even though there is neither scrambling nor high expressivity (see Figure~\ref{fig:Jacobian} in Appendix for a simple example).

\section{Omitted proofs}\label{app:omit}

\subsection{Proof of Theorem~\ref{thm:main} and supporting lemmas}

\begin{lemma}\label{lem:eig}
The smallest (and largest) eigenvalue of $\Sigma$ can be bounded. Specifically, $\eigmin(\Sigma)\ge 1 - \frac1{t^3}$ and $\eigmax(\Sigma)\le 1 + \frac1{t^3}$.
\end{lemma}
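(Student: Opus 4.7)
The plan is to apply Gershgorin's circle theorem to the symmetric matrix $\Sigma \in \R^{t \times t}$. Recall that Gershgorin's theorem states that every eigenvalue $\lambda$ of $\Sigma$ lies in at least one of the discs $D_j := \{z \in \mathbb{C} : |z - \Sigma_{j,j}| \leq \sum_{j' \neq j} |\Sigma_{j,j'}|\}$. Since $\Sigma$ is real symmetric, its eigenvalues are real, so each eigenvalue $\lambda$ satisfies $|\lambda - \Sigma_{j,j}| \leq \sum_{j' \neq j} |\Sigma_{j,j'}|$ for some $j$.

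By the hypothesis established in the proof of Theorem~\ref{thm:main}, $\Sigma_{j,j} = 1$ for every $j \in [t]$, and $|\Sigma_{j,j'}| \leq \frac{1}{t^4}$ for every $j \neq j'$. Therefore, for each row $j$,
\[
\sum_{j' \neq j} |\Sigma_{j,j'}| \leq (t - 1) \cdot \frac{1}{t^4} \leq \frac{1}{t^3}.
\]
Combining this with Gershgorin's bound yields $|\lambda - 1| \leq \frac{1}{t^3}$ for every eigenvalue $\lambda$ of $\Sigma$. Taking the minimum and maximum of $\lambda$ over the spectrum then gives $\eigmin(\Sigma) \geq 1 - \frac{1}{t^3}$ and $\eigmax(\Sigma) \leq 1 + \frac{1}{t^3}$, as desired.

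There is no serious obstacle here: the lemma is essentially a one-line application of Gershgorin once one observes that the diagonal entries are exactly $1$ and that the $t - 1$ off-diagonal entries per row are each at most $t^{-4}$ in absolute value, so that the sum of off-diagonal absolute values in any row is bounded by $(t-1)/t^4 \leq 1/t^3$. An alternative route would be to write $\Sigma = I + E$ with $\|E\|_{\max} \leq 1/t^4$ and bound $\|E\|_{\mathrm{op}} \leq \|E\|_F \leq \sqrt{t(t-1)}/t^4 \leq 1/t^3$, then use Weyl's inequality; this gives the same bound and could be used as a sanity check.
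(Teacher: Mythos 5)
Your proof is correct, but it reaches the bound by a different named route than the paper. The paper's own argument bounds the Rayleigh quotient directly: it writes $x^{T}\Sigma x \geq \sum_{j} x_j^2 - \sum_{j}\sum_{j'\neq j}\frac{1}{2t^4}(x_j^2+x_{j'}^2)$ using the AM--GM inequality $|x_j x_{j'}|\leq \frac12(x_j^2+x_{j'}^2)$ on each off-diagonal term, and then collects the $(t-1)/t^4 \leq 1/t^3$ factor; the same computation with signs flipped gives the upper bound on $\eigmax(\Sigma)$. Your invocation of Gershgorin's circle theorem packages exactly the same row-sum estimate $\sum_{j'\neq j}|\Sigma_{j,j'}| \leq (t-1)/t^4 \leq 1/t^3$ into a standard theorem, so the two arguments are quantitatively identical; Gershgorin is arguably cleaner to state, while the paper's version is self-contained and makes visible that only the quadratic form is being controlled. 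One small point worth being explicit about in either version: the lemma needs a bound on $|\Sigma_{j,j'}|$, not merely the one-sided bound $\Sigma_{j,j'}\leq 1/t^4$ stated in the main text. This is harmless here because the covariances $\EE[a]{y_i y_\ell}$ are sums of products $(b_i-b_\iota)(b_\ell-b_\iota)$ with $\iota<i<\ell$, hence nonnegative, so the upper bound is also a bound on the absolute value --- but you (and the paper) are both implicitly using this fact. Your Frobenius-norm/Weyl sanity check is also valid and gives the same constant.
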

\begin{proof}
    \begin{align*}
 \eigmin(\Sigma)
    \geq \min_x \frac{x^T \Sigma x}{\norm[2]{x}^2} 
    %\geq \min_x \frac{1}{\norm[2]{x}^2}\sum_{j=1}^t \paren{x_j^2 - \sum_{j' \neq j}\frac{1}{t^4} x_j x_{j'}} 
    \ge
\min_x \frac{1}{\norm[2]{x}^2}\sum_{j=1}^t \paren{x_j^2 - \sum_{j' \neq j}\frac{1}{2t^4} (x_j^2+ x_{j'}^2)} 
   \\ 
    \geq  \min_x \frac{1}{\norm[2]{x}^2} \paren{\norm[2]{x}^2 - \frac1{t^3} \norm[2]{x}^2}
    = 1 - \frac1{t^3}.
\end{align*}
Similarly, we have $\eigmax(\Sigma)\le 1 + \frac1{t^3}$.
\end{proof}

Given the lack of correlation between the $t$ random variables for sufficiently large $r$, we bound the probability that the signs of the elements of $x$ follow a certain pattern. 
That is, fix any $\alpha \in \{-1, 1\}^t$. Our goal is to lower-bound the probability that $\sign(x_j) = \alpha_j$ for all $j$ to be just smaller than $\frac1{2^t}$. Note that $\det(\Sigma) \geq \eigmin(\Sigma)^t \geq (1 - \frac1{t^3})^t \geq 1 - \frac1{t^2}$.
Similarly, $\det(\Sigma) \leq \eigmax(\Sigma)^t \leq 1 + \frac1{t^2}$. 

\begin{lemma}\label{app:proof1}
Let $\R^t_\alpha$ be the orthant having $\sign(x_j) = \alpha_j$ for all $x \in\R^t_\alpha$. Then 
\[
\pr{\forall j \in [t], \ \sign(x_j) = \alpha_j} \geq \paren{1 - \frac1t} \cdot \frac1{2^t}
\]
\end{lemma}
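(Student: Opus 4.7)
The plan is a direct coupling argument. Setting $y := \Sigma^{-1/2} x$ so that $y \sim \mathcal{N}(0, I)$, the event $\{x \in \R^t_\alpha\}$ becomes $\{\Sigma^{1/2} y \in \R^t_\alpha\}$. By Lemma~\ref{lem:eig}, all eigenvalues of $\Sigma$ lie in $[1 - \frac{1}{t^3}, 1 + \frac{1}{t^3}]$, so $\norm[\mathrm{op}]{\Sigma^{1/2} - I} \le \frac{1}{t^3}$ for sufficiently large $t$. This perturbation is far too small to flip the sign of any coordinate of $\Sigma^{1/2} y$ as long as $y$ has coordinates comfortably bounded away from zero and bounded $\ell_2$-norm.

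To make this precise, I would restrict attention to the event
\[A := \bracket{\forall j \in [t]:\ \mathrm{sign}(y_j) = \alpha_j \text{ and } |y_j| > \tfrac{1}{t^2}} \cap \bracket{\norm[2]{y}^2 \le Ct}\]
for a suitably large constant $C$. On $A$, for each coordinate $j$,
\[|(\Sigma^{1/2} y - y)_j| \le \norm[\mathrm{op}]{\Sigma^{1/2} - I}\cdot \norm[2]{y} \le \frac{\sqrt{C}}{t^{5/2}} \ll \frac{1}{t^2} \le |y_j|,\]
so $\mathrm{sign}((\Sigma^{1/2} y)_j) = \mathrm{sign}(y_j) = \alpha_j$ for every $j$, i.e.\ $x \in \R^t_\alpha$. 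Hence $\pr{x \in \R^t_\alpha} \ge \pr{A}$.

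It remains to lower-bound $\pr{A}$. By independence of the coordinates of $y$ and the standard Gaussian small-ball estimate $\pr{|y_j| \le u} \le u\sqrt{2/\pi}$,
\[\pr{\forall j:\ \mathrm{sign}(y_j) = \alpha_j,\ |y_j| > \tfrac{1}{t^2}} \ge \paren{\frac12 - \frac{1}{t^2\sqrt{2\pi}}}^t \ge \frac{1}{2^t}\paren{1 - \frac{2}{t\sqrt{2\pi}}},\]
using Bernoulli's inequality for the last step. The standard chi-square tail bound (e.g.\ Laurent--Massart) gives $\pr{\norm[2]{y}^2 > Ct} \le e^{-\Omega(t)}$, which is $o(1/(t\cdot 2^t))$ for $C$ sufficiently large and $t$ sufficiently large. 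Combining the two via a union bound yields $\pr{A} \ge (1 - 1/t)/2^t$, as desired.

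The main obstacle is just choosing the two thresholds so that both the small-ball loss (from demanding $|y_j| > 1/t^2$) and the tail loss (from restricting to $\norm[2]{y}^2 \le Ct$) comfortably fit within the $1/t$ slack of the statement. Since the operator-norm perturbation of $\Sigma^{1/2}$ scales as $1/t^3$ while the per-coordinate margin we impose is $1/t^2$, there is a $\sqrt{t}$ factor of room in the sign-preservation step, so the parameter choices are loose and no delicate balancing is required; the dominant loss is the small-ball term, which already saves roughly $0.8/t$ and therefore fits within the $1/t$ budget with room to spare.
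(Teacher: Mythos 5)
Your argument is correct in outline and reaches the stated bound, but it takes a genuinely different route from the paper. The paper's proof never leaves the density: it writes the orthant probability as an integral of the $\mathcal{N}(0,\Sigma)$ density, uses $x^T\Sigma^{-1}x \le \norm[2]{x}^2/\eigmin(\Sigma)$ and the determinant bounds from Lemma~\ref{lem:eig} to dominate that density pointwise from below by $(1-\tfrac1{t^2})^{(t+1)/2}$ times an isotropic Gaussian density, and then uses the exact orthant probability $2^{-t}$ of the isotropic law; the elementary inequality $(1-\tfrac1{t^2})^{(t+1)/2}\ge 1-\tfrac1t$ holds for every $t\ge 1$, so no truncation events or asymptotics in $t$ are needed. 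Your coupling argument ($x=\Sigma^{1/2}y$, sign preservation under the $O(t^{-3})$ operator-norm perturbation on a good event with per-coordinate margin $t^{-2}$ and bounded $\norm[2]{y}$) buys a more geometric and arguably more robust picture — it would survive, e.g., non-Gaussian rotationally-symmetric-ish noise where a clean density ratio is unavailable — but it costs you two truncations whose losses must be charged against the $1/t$ slack, and consequently your bound only holds for \emph{sufficiently large} $t$. That is a real (if minor) mismatch with how the lemma is invoked: in the main proof $t=6/\delta$ for a fixed $\delta\in(0,1)$, so $t$ can be as small as roughly $6$, and for such $t$ your chi-square tail term $e^{-\Omega(t)}$ is not obviously $o(1/(t2^t))$ without tuning $C$ — which in turn strains the requirement $\sqrt{C}/t^{5/2} < 1/t^2$, i.e.\ $C<t$. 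The constants can be made to work with a little care (e.g.\ shrinking the margin to $t^{-9/4}$ frees up room), but you should either carry out that bookkeeping explicitly or note that your version of the lemma requires $t$ above an absolute constant.
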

\begin{proof}
\begin{align*}
    \pr{\forall j \in [t], \ \sign(x_j) = \alpha_j}
    &= \int_{\R^t_\alpha} \frac{1}{(2\pi)^{t/2} \sqrt{\det(\Sigma)}} \exp\paren{-\frac12 x^T \Sigma^{-1} x} dx \\
    &\geq \int_{\R^t_\alpha} \frac{1}{(2\pi)^{t/2} \sqrt{1 + \frac1{t^2}}} \exp\paren{-\frac12 \norm{x}^2 \eigmax(\Sigma^{-1})} dx \\
    &\geq \sqrt{1 - \frac1{t^2}} \int_{\R^t_\alpha} \frac{1}{(2\pi)^{t/2} } \exp\paren{-\frac{ \norm{x}^2}{2\eigmin(\Sigma)} } dx \\
    &\geq \sqrt{1 - \frac1{t^2}}\paren{1 - \frac1{t^2}}^{t/2} \int_{\R^t_\alpha} \frac{1}{(2\pi)^{t/2} \paren{1 - \frac1{t^2}}^{t/2}} \exp\paren{-\frac{ \norm{x}^2}{2(1 - \frac1{t^2})} } dx \\
    &= \paren{1 - \frac1{t^2}}^{(t+1)/2} \pr[x \sim \mathcal{N}(0, \frac{1}{1 - 1/t^2}I_t)]{\forall j \in [t], \sign(x_j) = \alpha_j} \\
    &\geq \paren{1 - \frac1t} \cdot \frac1{2^t}.
\end{align*}
\end{proof}

Instead of finding the probability that $x$ has a certain \emph{fixed} sequence of signs, according to Lemma~\ref{lem:start}, we need to show that it has a positive sign that precedes a negative sign. 
\begin{definition}
We say that $x$ is ``good'' if the sequence of signs corresponding to $x$ has a positive sign that precedes a negative sign. (This corresponds to the requirement that there is some $i < \ell$, such that $y_i > 1$ and $y_\ell < 0$, although we haven't yet mandated that $y_i > 1$; only $y_i > 0$.)
\end{definition}

Note that this is a property of all but $t + 1$ of the $2^t$ combinations of signs $\alpha$. (The only $\alpha$'s without a negative sign following a positive one are those consisting of all negative $\alpha_j$'s followed by positive $\alpha_j$'s, whose number is only $t + 1$. 
\begin{align*}
    \pr[a]{x \text{ is good}}
    &\geq (2^t - t - 1) \paren{1 - \frac1t} \cdot \frac1{2^t} \\
    &= \paren{1 - \frac{t}{2^t} - \frac1{2^t}}\paren{1 - \frac1t} 
    \geq 1 - \frac\delta3.
\end{align*}
As a result, we can conclude that $f_k$ is 3-periodic so long as $x$ is good, the $b_j$'s are well-behaved (with probability at least $1 - \frac{96}{\sqrt{k}} \geq 1 - \frac{\delta}{3t}$), $y_{k/r^{t-j}} \not\in [0, 1]$ for all $j \in [t]$) and $y_{k / r^{t-j}} \geq 1$ if $\alpha_j = 1$ and $y_{k / r^{t-j}} \leq 0$ if $\alpha_j = -1$.
% Now, what we actually want is an itinerary where , provided that $\var[a]{y_{k/r_{$. We say that $x \in S_\alpha$ if that holds.
To bound that probability, we place an upper-bound on the probability that any $y_{k/ y^{j-1}} \in [0, 1]$ and assume that $\sigma \geq \frac{tr^{3t}}\delta$.

\begin{lemma}[Conclusion of proof of Theorem~\ref{thm:main}]\label{app:proof2}
$\pr{f_{k} \text{ is 3-periodic}}\ge 1-\del$
\end{lemma}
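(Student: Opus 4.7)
The plan is to combine the ingredients already established into a single union bound. I would enumerate three failure events and show each has probability at most $\delta/3$. The first, $E_1$, is the event that the biases fail one of the concentration bounds derived earlier in the proof of Theorem~\ref{thm:main}---specifically, the variance lower bound $\EE[a]{y_{i_j}^2} \ge \tfrac{i_j^3 \sigma^2}{8k^3}$ for each $j \in [t]$ and the covariance upper bounds for each pair. A union bound over the $t$ indices and $\binom{t}{2}$ pairs gives $\pr{E_1} \le O(t^2/\sqrt k)$, which I would make smaller than $\delta/3$ by taking $k$ large enough in terms of $\delta$ (recalling $t = 6/\delta$). The second event, $E_2$, is that the normalized Gaussian vector $x = (x_1, \dots, x_t)$ is not ``good'' in the sense defined just before this lemma; the calculation using Lemma~\ref{app:proof1} already bounds $\pr{E_2 \mid \overline{E_1}} \le \delta/3$.

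The third event, $E_3$, is that some $|y_{i_j}| < 1$, which could cause the sign of $x_j$ to fail to translate into $y_{i_j}$ lying strictly outside $[0,1]$ as needed to invoke Lemma~\ref{lem:start}. Conditional on $\overline{E_1}$, each $y_{i_j}$ is Gaussian with mean $0$ and standard deviation at least $\sigma \cdot i_j^{3/2}/(\sqrt{8}\, k^{3/2}) = \sigma/(\sqrt{8}\, r^{3(t-j)/2})$. By the choice $\sigma \ge t r^{3t}/\delta$, this is at least $t r^{3j/2 + 3t/2}/(\sqrt{8}\,\delta)$, which is much larger than $1$ even at $j=1$. A direct Gaussian anti-concentration bound of the form $\pr{|Y| < 1} \le \sqrt{2/\pi}/\mathrm{stdev}(Y)$ then yields $\pr{|y_{i_j}| < 1 \mid \overline{E_1}} \le \delta/(3t)$, and union-bounding over the $t$ indices gives $\pr{E_3 \mid \overline{E_1}} \le \delta/3$.

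Finally, on the complement of $E_1 \cup E_2 \cup E_3$, the sign vector of $x$ contains a positive entry at some index $j$ preceding a negative entry at some $j' > j$, and $|y_{i_{j}}|, |y_{i_{j'}}| \ge 1$. Since each $y_{i_j}$ shares its sign with $x_j$ (the normalizing factor is positive), we obtain $y_{i_{j}} \ge 1$ and $y_{i_{j'}} \le -1 < 0$. Applying Lemma~\ref{lem:start} with $i = i_{j}$ and $\ell = i_{j'}$, we conclude that $f_k$ has period $3$. A final union bound over the three failure events gives $\pr{f_k \text{ is 3-periodic}} \ge 1 - \delta$.

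The main delicacy is handling the two-stage randomness---over $b$ first, then over $a$---without entanglement. Packaging all the bias-side concentration into the single ``good biases'' event $\overline{E_1}$ is what makes it possible to treat the $x_j$'s as a clean Gaussian vector with a near-identity covariance when bounding both $E_2$ and $E_3$; otherwise the small random gaps $b_\ell - b_i$ would reintroduce correlations that would force one to redo the anti-concentration step for each realization of $b$. Beyond that, the argument is a straightforward union bound and an invocation of the sufficient condition from Lemma~\ref{lem:start}.
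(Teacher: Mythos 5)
Your proposal is correct and follows essentially the same route as the paper's own conclusion: a union bound over (i) the bad-bias event of probability $O(t^2/\sqrt{k})$, (ii) the event that the sign vector of $x$ is not ``good'' (bounded by $\delta/3$ via Lemma~\ref{app:proof1}), and (iii) a Gaussian anti-concentration bound on $y_{i_j}$ landing near the interval $[0,1]$, controlled by the variance lower bound together with the choice $\sigma \ge t r^{3t}/\delta$, followed by an invocation of Lemma~\ref{lem:start}. The only cosmetic difference is that you bound $\pr{|y_{i_j}|<1}$ rather than $\pr{y_{i_j}\in[0,1]}$, which costs a harmless factor of $2$.
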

\begin{proof}
\begin{align*}
    \pr{f_{k} \text{ is 3-periodic}}
    &\geq 1 - \frac\delta3 - \frac{96t}{\sqrt{k}} - t \pr[a]{y_{r^{t-1}} \in [0,1]} \\
    &\geq 1 - \frac\delta3 - \frac\delta3 -  \frac{t}{\sqrt{2\pi \EE[a]{y_{k/r^{t-1}}^2}}} \\
    &\geq 1 - \frac{2\delta}3 -  \frac{t}{\sqrt{2\pi \cdot \frac{3(k/r^{t-1})^3\sigma^2}{2k^3}}} \\
    &\geq 1 - \frac{2\delta}3 -  \frac{t r^{3t}}{3 \sigma} =1 - \delta.  
\end{align*}
\end{proof}

\subsection{Proof of Theorem~\ref{th:he}}\label{asec:thhe}
\thhe*

\begin{proof}

    The proof echoes that of Theorem~\ref{thm:main}.
    We consider the formulation of the previous problem with $t = 2$ and $r = 288$. That is, $y_{i_1} = y_{k/288}$ and $y_{i_2} = y_k$.
    
 By the previous proof, with probability at least $1 - \frac{128}{\sqrt{k}}$, $\EE[a]{y_{k/288}^2} \geq \frac{1}{2 \cdot 288^2}$ and $\EE[a]{x_{1}x_2} \leq \frac1{\sqrt{2}}$ for $x_j := \frac{y_{i_j}}{\sqrt{\var[a]{y_j}}}$. We say that this event occurs if ``$b$ is good.'' We can then place a lower bound on the probability that $f_k$ is 3-periodic. Let $\Sigma \in \R^{2 \times 2}$ be a covariance matrix with $\Sigma_{1,1} = \Sigma_{2,2} = 1$ and $\Sigma_{1,2} = \Sigma_{2,1} = \frac1{\sqrt2}$.
    \begin{align*}
       \pr{\text{$f_k$ 3-periodic}}
        &\geq \pr{\text{$b$ is good, } y_{i_1} \geq 1, y_{i_2} \leq 0} \\
        &\geq \pr{y_{i_1} \geq 1, y_{i_2} \leq 0} - \frac{128}{\sqrt{k}} \\
        &\geq \pr{x_1 \geq 2 \cdot 288^2, x_2 \leq 0} - \frac{128}{\sqrt{k}} \\
        &\geq \pr[g \sim \mathcal{N}(0, \Sigma)]{g_1 \geq 288^2, g_2 \leq 0} - \frac{128}{\sqrt{k}} \\
        &\coloneqq c - \frac{128}{\sqrt{k}} 
        \geq \frac{c}{2}.
    \end{align*}
    The second-last inequality relies on the fact that $ \pr[g \sim \mathcal{N}(0, \Sigma)]{g_1 \geq 288^2, g_2 \leq 0}$ is an extremely small constant $c$, and the final one assumes that $k \geq \frac{2^{15}}{c^2}$.

\end{proof}

\subsection{Proof of Theorem~\ref{th:below}}\label{asec:th:below}
\thbelow*

% \begin{theorem}\label{app:proof3}
%     Consider $f_k$ initialized with $a_i \sim \mathcal{N}(0, \frac{1}{4 k \log k})$. Then, the probability that has no fixed points (and therefore exhibits no chaotic behavior) is at least $1-\frac1k$.
% \end{theorem}

\begin{proof}
    To show that $f_k$ has no fixed points, it suffices to show that $y_i < b_i$ for all $i \in [k]$.
    This ensures that $f_k(x) \leq x$ for all $x \in (0, 1]$, which means that $f_k^t(x)$ approaches zero as $t$ increases and ensures that there are no fixed points.
    
    From the proof of Theorem~\ref{thm:main}, $y_i \mid b_i$ is a Gaussian random variable with mean 0 and variance \[\frac{1}{4k \log k} \sum_{\iota=1}^{i-1} (b_i - b_\iota)^2 \frac{i b_i^2}{4k \log k} \leq \frac{b_i^2}{4 \log k}\] for any $0 < b_1 < \dots < b_k < 1$.
    We now show that the probability that any $y_i \geq b_i$ is small by using Gaussian tail bounds.
    \begin{align*}
        \pr{f_k \text{ has a fixed point}}
        &\leq \pr{\exists i \in [k] \text{ s.t. } y_i \geq b_i} 
        \leq \sum_{i=1}^k \pr{y_i \geq b_i} 
        \leq \sum_{i=1}^k \sup_{b} \pr{y_i \geq b_i \mid b_i}  \\
        &\leq \sum_{i=1}^k \sup_b \exp\paren{\frac{b_i^2}{2\var{y_i^2 \mid b_i}}} 
        \leq \sum_{i=1}^k \exp\paren{2 \log k}
        \leq \frac1k.
    \end{align*}
\end{proof}

\section{Omitted figure in experiments}\label{app:figs}
In the experimental section, Sec.~\ref{sec:experiments}, we focused on randomly initialized RNNs and we need to check their periodicity, specifically whether or not the output function is chaotic or not (in particular whether or not it had period 3, and hence is Li-Yorke chaotic). But how can we do this? 

The way we classify whether the output of an RNN is chaotic or not is done by plotting one iteration of the RNN and three iterations of the RNN and then checking whether there are more fixed points in the latter, implying period 3. See Fig.~\ref{fig:checking}b, where the blue plot has many more intersections with the identity line $y=x$ compared to Fig.~\ref{fig:checking}a. Such intersection points that do not constitute fixed points of $f(x)$ are periodic points of period 3.

\begin{figure}[ht!]
    \centering
    \includegraphics[width=\textwidth]{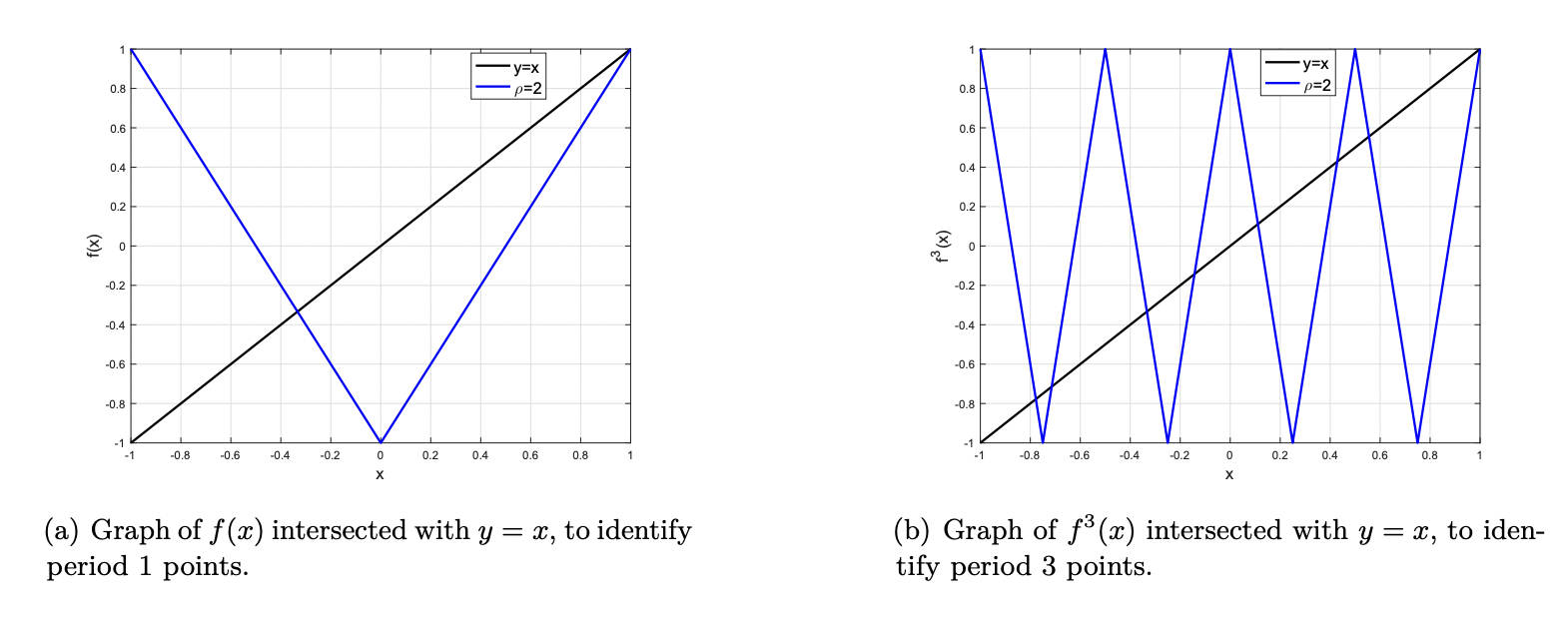}
    \vspace{-0.2cm}
    \caption{We plot the output of RNN, after 1 and 3 feedback loops. Intersections with $y=x$ inform us about periodicity. The plot on the right is $f^3$ and the fact that there are more fixed points of $f^3$ than for $f$ (plot on the left) means that there is at least one periodic point with period 3, which means $f$ exhibits Li-Yorke chaos.}%\clayton{To make this clearer, maybe we want to define $t$ timesteps of an RNN as $f^t(x)$ and say that we're plotting $\| f^t(x) - f^t(y)\|$ instead? Otherwise, it's a little confusing what is changing.}}
 \label{fig:checking}
\end{figure}

%%

%\section{Discussion}

%%%%%%%%%%%%%%%%%%%%%%%%%%%%%%%%%%%%%%%%%%%%%%%%%%%%%%%%%%%%%%%%%%%%%%%%%%%%%
%%%%%%%%%%%%%%%%%%%%%%%%%%%%%%%%%%%%%%%%%%%%%%%%%%%%%%%%%%%%%%%%%%%%%%%%%%%%%%%

\end{document}